%% file: trust_region_IRL.tex
\documentclass{article}

\usepackage{microtype}
\usepackage{graphicx}
\usepackage{subcaption}
\usepackage{booktabs}
\usepackage[utf8]{inputenc}
\usepackage{xspace}
\usepackage{color}
\usepackage{scalerel,stackengine}
\usepackage{wrapfig}
\usepackage{nicefrac}
\usepackage{booktabs}
\usepackage{graphicx}
\usepackage{float}
\usepackage{multirow}
\usepackage{amsmath}
\usepackage{amssymb}
\usepackage{mathtools}
\usepackage{amsthm}
\usepackage{makecell}
\usepackage{placeins}

\input{math_commands.tex}

\usepackage{url}

\newcommand{\policy}{\ensuremath{\pi(\mathbf{a}|\mathbf{s})}\xspace}
\newcommand{\reward}{\ensuremath{r(\mathbf{s},\mathbf{a})}\xspace}
\newcommand{\policyIter}[1]{\ensuremath{\pi^{(#1)}(\mathbf{a}|\mathbf{s})}\xspace}
\newcommand{\rewardIter}[1]{\ensuremath{r^{(#1)}(\mathbf{s},\mathbf{a})}\xspace}
\newcommand{\intrewardIter}[1]{\ensuremath{\tilde{r}^{(#1)}(\mathbf{s},\mathbf{a})}\xspace}

\newcommand{\policyDist}{\ensuremath{\rho_{\pi}(\mathbf{s})}\xspace}

\newcommand{\policyOcc}{\ensuremath{\rho_{\pi}(\mathbf{s},\mathbf{a})}\xspace}
\newcommand{\policyOccIter}[1]{\ensuremath{\rho_{\pi^{(#1)}}(\mathbf{s},\mathbf{a})}\xspace}
\newcommand{\expOcc}{\ensuremath{\rho_{E}(\mathbf{s},\mathbf{a})}\xspace}
\newcommand{\updateOp}[1]{\ensuremath{\mathcal{U}_{\epsilon}^{\rho^{(#1)}}}\xspace}
\newcommand{\corupdateOp}[1]{\ensuremath{\mathcal{U}_{\epsilon_{\text{tr}}}^{\rho^{(#1)}}}\xspace}

\newcommand{\AlgRComment}[1]{%
  \STATE \hfill {\footnotesize\(\triangleright\)~#1}%
}

\usepackage[accepted]{icml2026}
\usepackage[colorlinks=true,linkcolor=red,citecolor=blue,urlcolor=red]{hyperref}
\usepackage[capitalize,noabbrev]{cleveref}

\theoremstyle{plain}
\newtheorem{theorem}{Theorem}[section]

\newtheorem{lemma}[theorem]{Lemma}

\theoremstyle{definition}

\theoremstyle{remark}

\crefname{lemma}{lemma}{lemmas}
\Crefname{lemma}{Lemma}{Lemmas}

\newtheorem{fact}[theorem]{Fact}
\crefname{fact}{fact}{facts}
\Crefname{fact}{Fact}{Facts}

\usepackage[textsize=tiny]{todonotes}
\icmltitlerunning{Trust Region Inverse Reinforcement Learning: Explicit Dual Ascent using Local Policy Updates}

\begin{document}

\twocolumn[
  \icmltitle{Trust Region Inverse Reinforcement Learning: Explicit Dual Ascent using Local Policy Updates}

    \icmlsetsymbol{equal}{*}
    \begin{icmlauthorlist}
      \icmlauthor{Anish Diwan}{tud,rig}
      \icmlauthor{Davide Tateo}{tud,lund}
      \icmlauthor{Christopher E. Mower}{noah}
      \icmlauthor{Haitham Bou Ammar}{noah,ucl}
      \icmlauthor{Jan Peters}{tud,hessian,dfki,rig}
      \icmlauthor{Oleg Arenz}{tud,rig}
    \end{icmlauthorlist}
    
    \icmlaffiliation{tud}{Technical University of Darmstadt}
    \icmlaffiliation{lund}{Lund University}
    \icmlaffiliation{hessian}{hessian.AI}
    \icmlaffiliation{dfki}{German Research Center for AI (DFKI)}
    \icmlaffiliation{rig}{Robotics Institute Germany (RIG)}
    \icmlaffiliation{noah}{Huawei, Noah’s Ark Lab}
    \icmlaffiliation{ucl}{University College London}
    
    \icmlcorrespondingauthor{Anish Diwan}{anish.diwan@robot-learning.de}

    \icmlkeywords{Inverse Reinforcement Learning, Learning from Demonstrations, Imitation Learning, Reward Learning, Reinforcement Learning, Robotics}

  \vskip 0.3in
]

\printAffiliationsAndNotice{\textsuperscript{\(\ddagger\)} Code and supplementary material are available at: \href{https://anishhdiwan.github.io/trust-region-irl/}{https://anishhdiwan.github.io/trust-region-irl/}.}  

\begin{abstract}
Inverse reinforcement learning (IRL) is typically formulated as maximizing entropy subject to matching the distribution of expert trajectories. Classical (dual-ascent) IRL guarantees monotonic performance improvement but requires fully solving an RL problem each iteration to compute dual gradients. More recent adversarial methods avoid this cost at the expense of stability and monotonic dual improvement, by directly optimizing the primal problem and using a discriminator to provide rewards. In this work, we bridge the gap between these approaches by enabling monotonic improvement of the reward function and policy without having to fully solve an RL problem at every iteration. Our key theoretical insight is that a trust-region-optimal policy for a reward function update can be globally optimal for a smaller update in the same direction. This smaller update allows us to explicitly optimize the dual objective while only relying on a local search around the current policy. In doing so, our approach avoids the training instabilities of adversarial methods, offers monotonic performance improvement, and learns a reward function in the traditional sense of IRL---one that can be globally optimized to match expert demonstrations. Our proposed algorithm, \textit{Trust Region Inverse Reinforcement Learning (TRIRL)}, outperforms state-of-the-art imitation learning methods across multiple challenging tasks by a factor of 2.4x in terms of aggregate inter-quartile mean, while recovering reward functions that generalize to system dynamics shifts.
\textsuperscript{\(\ddagger\)}
\end{abstract}
\section{Introduction}
\label{sec:introduction}
As Autonomous agents become prevalent in everyday settings, empowering these systems with human-like behaviour becomes an important challenge. This challenge can be addressed with Inverse Reinforcement Learning (IRL) \citep{ng2000algorithms, russell1998learning}, a machine learning framework where agents infer the underlying intentions of human demonstrations in the form of a reward function. By optimizing this learnt reward with reinforcement learning, IRL methods can recover robust imitation policies with the additional benefit of transferring the behaviour to new environments.  However, inferring an informative reward function from demonstrations is challenging, and therefore, many methods focus on Imitation Learning \citep{osa2018algorithmic}, the problem of directly learning a policy that matches the demonstrations in some given environment.
\begin{figure}[t!]
    \centering
    \includegraphics[width=\columnwidth]{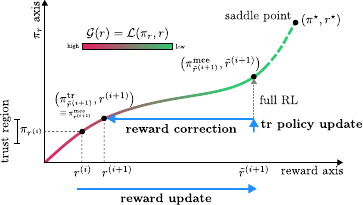}
    \caption{A comparison of TRIRL (ours) and a MaxCausalEnt-IRL style update. The Lagrangian dual to be optimized is indicated by the curve $\mathcal{L}(\pi_r, r)$. MCE-IRL performs a full RL optimization after updating the reward function. In contrast, TRIRL only optimizes the policy within a trust region of the previous MCE policy and accounts for this by correcting the updated reward function. Trust region policy updates are much cheaper to compute and reward correction ensures that the new policy-reward pair gets closer to the saddle point of $\mathcal{L}(\pi, r)$. Our algorithm hence has the same monotonic improvement guarantees as MCE-IRL, while being able to converge in complex, high-dimensional settings.}
    \label{fig:correction}
\end{figure}

IL only addresses the policy-side problem in IRL and does not extract the underlying reward function. It is generally solved using adversarial optimization, with methods such as GAIL \citep{ho2016generative} and its several variants \citep{peng2018variational, kostrikov2018discriminator, ghasemipour2020divergence, peng2021amp} being the standard approach. These methods formulate IL as a two-player min-max game, where one player (called the discriminator) assigns local rewards based on how well the RL policy imitates the expert, while the policy updates itself to maximise this local signal given by the discriminator. However, adversarial IL is inherently unstable and noisy because the discriminator reward provides only a local correction signal. Practically speaking, these methods are challenging to tune and their performance is highly task-dependent. This raises a key question: \textit{how can we learn informative reward functions and effective policies in a scalable, principled way, while avoiding the instability of adversarial IL?} We approach this question by retracing modern adversarial IL back to its theoretical roots in the Maximum Causal Entropy IRL (MCE-IRL) framework \citep{ziebart2008maximum, ziebart2010modeling}. Our insights motivate a scalable approach for imitation learning and inverse reinforcement learning based on explicit dual ascent.

MCE-IRL interprets imitation as having similar occupancy\footnote{Originally, \citet{ziebart2010modeling} formalize MCE-IRL as a state-visitation matching problem but this is equivalent to occupancy matching \citep{ho2016generative, arenz2016optimal}.} over the underlying Markov Decision Process. An imitation policy is considered MCE optimal if it has the maximum causal entropy among all candidate policies that induce the same occupancy as the expert. This problem is typically solved by optimizing its Lagrangian $\mathcal{L}(\pi, r)$, where $\mathcal{G}(r) = \mathcal{L}(\pi_r, r)$ is its dual, and $\pi_r$ is the optimal policy for $r$. The original MCE-IRL method \citep{ziebart2010modeling} is a dual-ascent algorithm that \textit{alternates between policy optimization and reward updates}. At every iteration, the reward function's (dual variable) gradients are given by the difference between expert and agent feature counts and the policy (primal variable) is learnt by solving the maximum entropy RL problem till convergence. Modern adversarial IL algorithms reformulate this dual-ascent procedure as a two-player min-max game. However, in doing so, the global intermediate reward function from MCE-IRL is replaced by a local reward based on the previous policy's rollouts (provided by a discriminator). Consequently, the policy update is not MCE optimal; instead, the policy only takes a few gradient steps toward maximizing the entropy-augmented reward. While this adversarial procedure has the same saddle point as MCE-IRL~\citep[Proposition~3.2]{ho2016generative}, it relies on per-step local optimization to obtain a solution and does not optimize the reward function corresponding to the dual. Ultimately, the local discriminator rewards and suboptimal policies render it practically unstable and difficult to train reliably and consistently.

This paper proposes a new, non-adversarial IRL algorithm that solves the problem of local rewards and suboptimal intermediate policies. We present \textit{Trust Region Inverse Reinforcement Learning (TRIRL)}, an algorithm that performs dual ascent on the original MCE-IRL problem, leading to monotonic performance improvement and stable learning. Crucially, our method avoids both the need to run expensive, full RL solutions at every time step (like MCE-IRL) or the reliance on approximate local optimization (like GAIL). 

Our work builds on prior work by \citet{arenz2016optimal} that showed that instead of descending $\mathcal{G}(r)$ along its parametric gradient, it is significantly more efficient to perform a reward update in function space. Given an initial max-ent pair $(\pi_{r}, r)$ and their function-space reward update on $r$, our main result is to show that policy optimization within a reverse KL trust-region around $\pi_r$ is sufficient to find a policy $\pi^{\text{mce}}$ that is max-ent optimal for a new, corrected reward function computed by taking a \textit{smaller update step} along the same function-space update direction. Hence, we can leverage a novel mechanic for IRL: instead of finding a max-ent optimal policy for the updated reward, we find a trust region optimal policy for this reward, and correct the reward function to account for the fact that our policy was only optimized locally. This results in a valid maximum entropy pair $(\pi^{\text{mce}}_{r_{\text{corrected}}}, r_{\text{corrected}})$.
This means that the policy we learn at each iteration (i) is a global optimizer of the corrected reward function and (ii) gets closer to the entropy-regularized expert occupancy in terms of the reverse KL divergence. By repeating this procedure we recover an IRL algorithm with the same monotonic performance improvement as MCE-IRL.
We illustrate our method in \Cref{fig:correction}. Our algorithm outperforms prior works like GAIL, AIRL, AMP, LSIQ, NEAR, and SFM \citep{ho2016generative, fu2017learning, peng2021amp, al2023ls, diwan2025noise, jainnon} on a variety of challenging continuous control tasks. 

\paragraph{Related Work.} Behavioural Cloning (BC) is arguably the most straightforward approach to Imitation Learning. It formulates IL as a supervised learning problem to find a policy that closely matches the demonstrated actions. Although BC methods like \citet{pomerleau1991efficient, reddy2019sqil} have previously shown successful imitation capabilities, the supervised fitting has theoretical limitations---namely covariate shift, poor generalization, and the need for large datasets---that degrade their performance in real-world environments. On the other hand, Inverse RL methods like MCE-IRL \citep{ziebart2008maximum, ziebart2010modeling} learn imitation policies using reinforcement learning and are hence more robust than BC. This formulation can also be used for deriving direct IL methods, such as GAIL~\citep{ho2016generative}, which reformulate the dual-ascent formulation of \citet{ziebart2010modeling} into an adversarial min-max optimization procedure that minimizes the Jensen-Shannon divergence between the agent and expert occupancies. Several other works build on this adversarial formulation. For instance, \citet{fu2017learning} modify the GAIL procedure into a state-based algorithm focusing on reward recovery. \citet{ghasemipour2020divergence} reformulate it for general \textit{f}-divergences and \citet{peng2018variational} leverage the empirical benefits of $\chi^2$-divergence GANs \citep{mao2017least} for adversarial IL. \citet{kostrikov2019imitation} present ValueDice, a method that leverages the inverse Bellman operator to reformulate GAIL into a value-function based off-policy, distribution matching approach. Several other works~\citep{kostrikov2018discriminator, orsini2021matters, diwan2025noise} shed light on important factors like survival/termination bias induced by learnt reward functions, empirical training dynamics of adversarial IL, and theoretical reasoning for their instability. While the adversarial IL prior works listed here are diverse in terms of their contributions, all of them are prone to instabilities rooted in the local rewards and suboptimal policies arising from adversarial learning. The work of \citet{ziebart2010modeling} can also be formulated into non-adversarial techniques. \citet{arenz2020non} extend ValueDICE by formulating a lower-bound reward function for reverse KL distribution matching and using soft actor critic (SAC) \citep{haarnoja2018soft} to learn a $Q$-function and policy. \citet{garg2021iq} propose IQ-Learn, a similar method that generalizes to a variety of divergences between the agent and expert occupancies. Instead of optimizing the dual and primal variables separately, IQ-learn learns a $Q$-function via distribution matching and approximates the policy using a closed-form relationship to the $Q$-function. However, IQ-learn requires a dynamics model to recover the reward function from this learnt $Q$-function. Recently, \citet{al2023ls} introduced LSIQ, an extension of IQ-Learn that leverages the benefits of $\chi^2$-divergence minimization and uses mixture distributions for improved performance. Several of these non-adversarial IRL methods still use off-policy RL (SAC) for policy learning. This makes it challenging to apply them to larger, highly parallelized environments, where SAC faces scaling challenges. Finally, \citet{boularias2011relative} previously explored the idea of constraining policy updates in IRL to a relative-entropy-based trust region around a baseline policy. However, their method does not address reward updates under local policy optimization, and inherits the scaling challenges of \citet{ziebart2010modeling}, requiring trajectory-level sampling, and hand-specified features.
\section{Background}
\label{sec:background}
\paragraph{Preliminaries.} Similarly to previous work, we model the environment as a Markov Decision Process (MDP) defined by a tuple $(\mathcal{S}, \mathcal{A}, \mu_0, \mathcal{P}, r, \gamma)$, where $\mathcal{S}$ is the state space, $\mathcal{A}$ is the action space, $\mu_0$ is the initial state distribution, $\mathcal{P}(\mathbf{s}' \vert \mathbf{s}, \mathbf{a})$ represents the transition dynamics, $r(\mathbf{s}, \mathbf{a}) \in \mathbb{R}$ is the (unknown) reward function, and $\gamma$ is the discount factor. $\Pi$ is the set of all stationary stochastic policies mapping states in $\mathcal{S}$ to actions in $\mathcal{A}$. We define the occupancy measure $\policyOcc = \pi(\mathbf{a} \vert \mathbf{s}) \sum_{t=0}^\infty \gamma^t \mu^\pi_t(\mathbf{s})$, where $\mu_{t}^\pi(\mathbf{s}')= \sum_{\mathbf{s}} \mu^\pi_{t}(\mathbf{s}) \sum_{\mathbf{a}} \pi(\mathbf{a} \vert \mathbf{s})P(\mathbf{s}' \vert \mathbf{s}, \mathbf{a})$ is the state distribution for $t>0$, with $\mu^\pi_0(\mathbf{s})=\mu_0(\mathbf{s})$. We work in the $\gamma$-discounted infinite horizon setting and use an expectation with respect to a policy $\pi \in \Pi$ to denote an expectation with respect to the trajectory it generates. We refer to the (unknown) expert policy as $\pi_E$, its occupancy measure as $\expOcc$, and the dataset of expert demonstrations as $\mathcal{D}$. Following prior work \citep{ziebart2010modeling, haarnoja2018soft}, we define entropy regularized value functions $V_{\pi}(\mathbf{s}) = \mathbb{E}_{\pi} \left[ Q_{\pi}(\mathbf{s}, \mathbf{a}) - \log \pi(\mathbf{a} \vert \mathbf{s}) \right]$ and  $Q_{\pi}(\mathbf{s}, \mathbf{a}) = R(\mathbf{s}, \mathbf{a}) + \gamma \mathbb{E}_{\mathbf{s}' \sim \mathcal{P}} \left[ V_{\pi}(\mathbf{s}') \right]$ as ``soft'' value functions. 
\paragraph{Max Causal Entropy IRL/RL.} The maximum entropy principle \citep{jaynes1957information} states that given several probability distributions consistent with the observed data, the best approach is to choose the least committal one (i.e., one that has the maximum entropy). When applied to the problem of (inverse) reinforcement learning \citep{ziebart2010modeling, eysenbach2021maximum}, the maximum entropy principle offers improved robustness and an elegant solution to the problem of encouraging exploration. Given a set of demonstration trajectories $\mathcal{D} = \{ (\mathbf{s}_i,\mathbf{a}_i) \}_{i=1}^N$ sampled from an expert, MCE-IRL aims to find a reward function $r \in \mathbb{R}^{\mathcal{S} \times \mathcal{A}}$ and a policy $\pi \in \Pi$ that solve the optimization problem: $\max_{\pi} \min_{r} \left( \mathbb{E}_{\rho_\pi}[r(\mathbf{s}, \mathbf{a})] + H(\pi) \right) - \mathbb{E}_{\rho_{E}}[r(\mathbf{s}, \mathbf{a})]$. Whereas, given a reward function $r$, Maximum Entropy RL aims to find a policy $\pi$ that maximises the expected reward plus entropy: $\max_{\pi} \mathbb{E}_{\rho_\pi}[r(\mathbf{s}, \mathbf{a})] + H(\pi)$. Here $H(\pi) \triangleq \mathbb{E}_{\pi}[-\log \pi(\mathbf{a} \vert \mathbf{s})]$ is the discounted causal entropy of the policy in state $\mathbf{s}$. On solving the Lagrangian, the optimal policy satisfies the Boltzmann distribution $\pi^{\star}(\mathbf{a} \vert \mathbf{s}) = \frac{1}{Z_{\mathbf{s}}}\exp{(Q^{\star}(\mathbf{s}, \mathbf{a}))}$ where
\begin{align*}
    Q^{\star}(\mathbf{s}, \mathbf{a}) &=  r(\mathbf{s}, \mathbf{a}) + \gamma \mathbb{E}_{\mathbf{s}' \sim \mathcal{P}} \left[\log \sum_{\mathbf{a}'} \exp(Q^{\star}(\mathbf{s}', \mathbf{a}')) \right], \\
    V^{\star}(\mathbf{s}) &= \log \sum_{\mathbf{a}'} \exp(Q^{\star}(\mathbf{s}', \mathbf{a}'))
\end{align*}
are the optimal soft value functions and $Z_{\mathbf{s}}$ is a normalization term. Notice that maximum entropy RL is a subroutine of the MCE-IRL procedure.
\label{sec:IRL_by_dist_matching}
\paragraph{IRL by Distribution Matching.} Our work is based on IRL by reverse KL-divergence based distribution matching. This problem is deeply rooted in prior work \citep{arenz2016optimal, fu2017learning, kostrikov2019imitation, arenz2020non,ghasemipour2020divergence,garg2021iq}, and is expressed by the optimization problem,
\begin{align}
   \label{eq:IRLDM}
&\underset{\policy}{\max} \mathbb{E}_{\policyDist} \left[ H(\policy) \right] - \beta \mathbb{E}_{\policyOcc} \left[ \log \frac{\policyOcc}{\expOcc} \right]
\end{align}
where $\beta$ trades off between entropy regularization and the objective of matching the expert's occupancy. \citet{arenz2016optimal} show that under this problem setting, in the limit $\beta \rightarrow \infty$, the optimal policy and value functions are the same as for MCE-IRL, and the reward function for matching the expert's distribution is, $r^\star = \beta \log \left( \nicefrac{\expOcc}{\rho_{\pi^\star}(\mathbf{s},\mathbf{a})} \right)$, where $(\pi^\star, r^\star)$ is the same saddle point as in MCE-IRL. The optimal reward function depends on the state-action distribution induced by the optimal policy of~\Cref{eq:IRLDM}, resulting in a cyclic dependency between the optimal policy and the reward function. However, \citet{arenz2016optimal} show that the reward function can also be learnt by iteratively applying the function-space update operator $\updateOp{i}$
\begin{align}
\label{eq:irldm_rew_update}
    r^{(i+1)} = \left( \updateOp{i} \right) r^{(i)} &= r^{(i)} - \epsilon \underbrace{\left(r^{(i)} - \beta \log \frac{\expOcc}{\policyOccIter{i}} \right)}_{\delta^{(i)}}
\end{align}
to the current estimate of the reward function, $r^{(i)}$, where $\pi^{(i)}$ is the optimal maximum entropy policy for reward function $r^{(i)}$ (we use $\rho^{(i)}$ as shorthand for $\rho_{\pi^{(i)}}$). It can be shown that this update aligns with the gradient of the dual of \Cref{eq:IRLDM} (proof in \Cref{appendix:proofs}) and is empirically several orders of magnitude more efficient than using the actual gradient. Hence, successive applications of \Cref{eq:irldm_rew_update} lead to monotonic improvement in the objective. Such prior work was restricted to linear approximations of the system dynamics and assumed the expert distribution to be Gaussian. Under these relaxations, the optimal policy can be computed using dynamic programming and the log density ratio, $\log \left( \nicefrac{\expOcc}{\policyOccIter{i}} \right)$, computed in closed form. However, such relaxations do not work in most real-world applications.
\section{Trust Region Inverse Reinforcement Learning}
\label{sec:TR_IRL}
\begin{figure}[t!]
	\centering
	\begin{minipage}[c]{0.38\columnwidth}
		\centering
		\includegraphics[width=\linewidth]{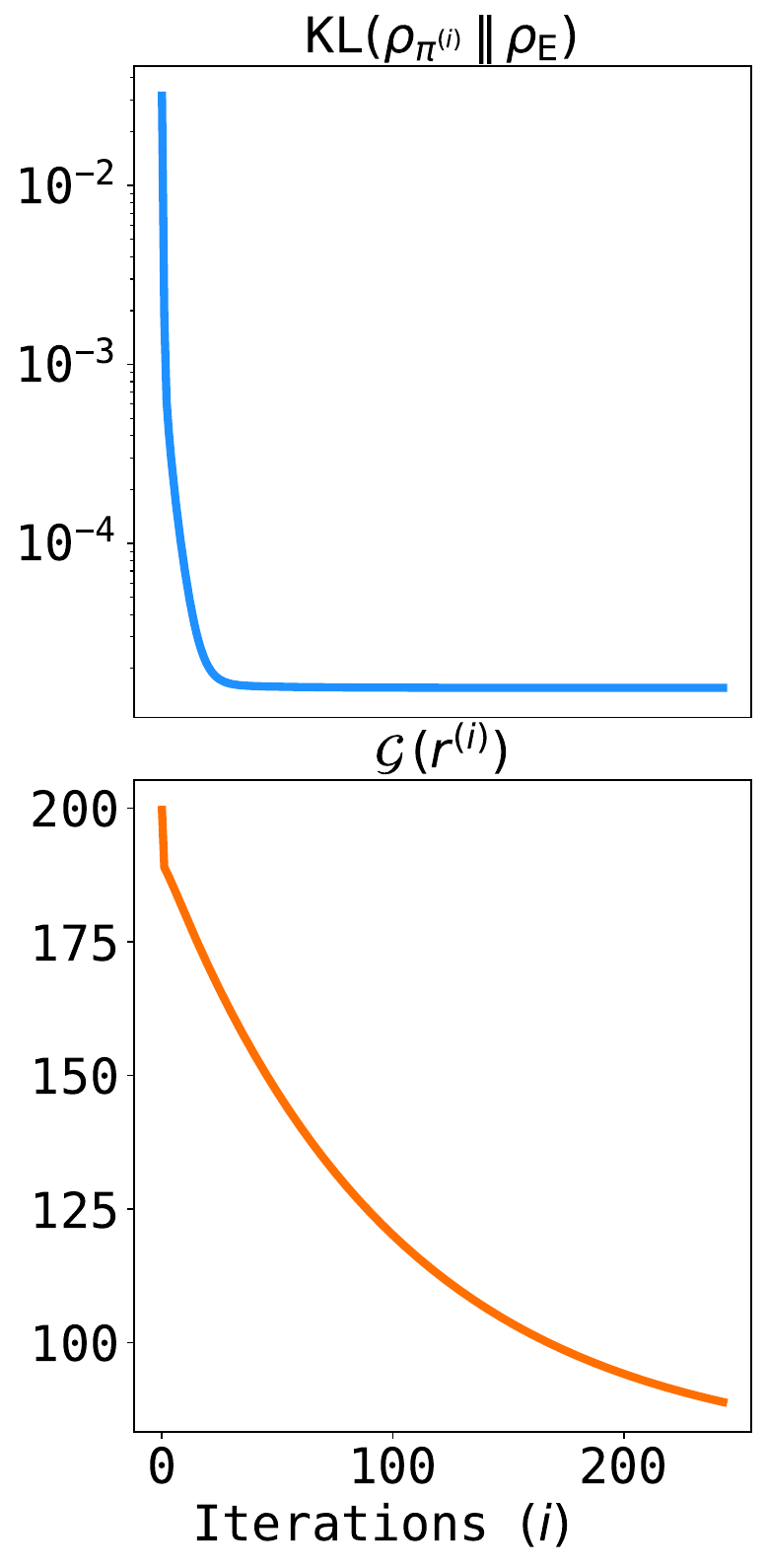}
	\end{minipage}%
	\hspace{0.005\columnwidth}
	\begin{minipage}[c]{0.6\columnwidth}
		\centering
		\includegraphics[width=\linewidth]{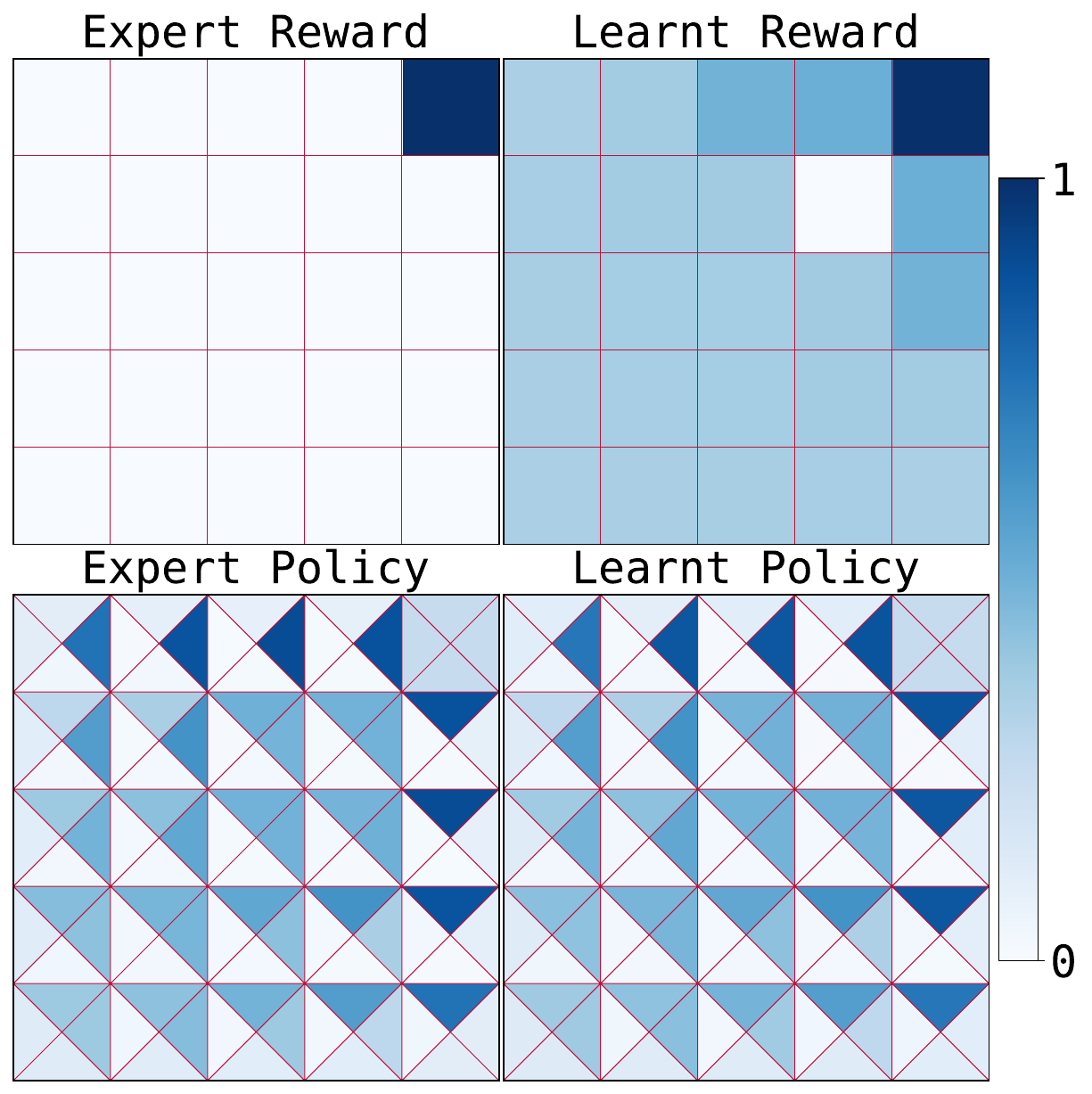}
	\end{minipage}
	\caption{We demonstrate TRIRL in a grid-world experiment and compare policies and normalized rewards. We also show the monotonically improving reverse KL divergence and dual objective. TRIRL exactly recovers the expert's policy and recovers a reward function that matches the expert's reward (except for ambiguity due to the temporal credit assignment problem).}
	\label{fig:global_reward}
\end{figure}
In real-world, continuous settings, the densities $\policyOcc$ and $\expOcc$ cannot be computed in closed form. Instead, the log density ratio, $\log \left( \nicefrac{\expOcc}{\policyOccIter{i}} \right)$ needs to be approximated from the dataset of expert demonstrations and policy rollouts. In such a setting, the distribution matching approach yields a reward formulation that is very similar to modern adversarial techniques like GAIL \citep{ho2016generative}. However, as highlighted in \Cref{sec:introduction}, the local reward approximations and suboptimal policy updates often lead to unstable convergence for such adversarial methods. 

Instead, we propose non-adversarial reward function updates similar to \Cref{eq:irldm_rew_update} that ensure monotonic improvement on the objective function (\Cref{eq:IRLDM}). The main challenge in applying \Cref{eq:irldm_rew_update} in real-world settings is that this update---just like the gradient-based update---depends on the optimal policy $\policyIter{i}$ for the current reward function estimate $\rewardIter{i}$. Vanilla MCE-IRL finds this optimal policy by running maximum entropy reinforcement learning until convergence. However, solving the full deep reinforcement learning problem at every iteration of inverse reinforcement learning is prohibitively expensive for most real-world applications. Ideally, we would like to come up with a procedure that obtains the current iteration's optimal policy only after a few updates to the previous policy, such that we could iteratively keep obtaining the MCE optimal policy and the corresponding updated reward function. In this paper, we derive reward function updates similar to \Cref{eq:irldm_rew_update} and show that the MCE-optimal policy can be computed based on trust-region optimal policies
\begin{equation}
\label{eq:TR_MAXENT_RL}
\begin{alignedat}{2}
\pi^{(i+1)}_{\text{tr}} &= \underset{\policy}{\argmax} \\
&\quad \mathbb{E}_{\policyDist} \left[ H(\policy) \right]
      + \mathbb{E}_{\policyDist} \left[ \mathbb{E}_\pi\left[\rewardIter{i} \right] \right] , \\
\text{s.t.} &\quad \mathbb{E}_{\policyDist} \left[ \textrm{KL}\Big(\pi||\pi^{(i)}_{\text{tr}}\Big) \right] \le \zeta.
\end{alignedat}
\end{equation}

Our main contribution is to show that a trust region policy $\pi_\text{tr}^{(i+1)}(\mathbf{a}|\mathbf{s})$ for a reward function $\tilde{r}^{(i+1)} = \left( \updateOp{i} \right) r^{(i)}$ is the maximum-entropy optimal policy with respect to a different reward function $r^{(i+1)} = \left( \corupdateOp{i} \right) r^{(i)}$, where reward updates are computed as per \Cref{eq:irldm_rew_update} and $\epsilon_\text{tr} \leq \epsilon$. This concretely solves the problem of finding an optimal policy in the inner loop of IRL since finding a trust-region optimal policy is easier and only requires a local search. Further, the rewards learnt using \Cref{eq:irldm_rew_update} are guaranteed to improve the distribution matching objective's dual. Maximizing this reward function guarantees that the updated policy's occupancy is closer to the entropy-regularized expert occupancy than the occupancy of the policy used to learn this reward (in terms of reverse KL divergence).
\begin{lemma}
\label[lemma]{lemma1}
The trust-region optimal maximum causal entropy policy for reward function \reward, corresponds to the optimal maximum causal entropy policy for a reward function that takes the form $r_\eta(\mathbf{s},\mathbf{a}) = \tfrac{1}{(1+\eta)}\reward + \tfrac{\eta}{(1+\eta)} \log \policyIter{i}$, where $\eta$ is the Lagrangian multiplier corresponding to the trust-region constraint.
\end{lemma}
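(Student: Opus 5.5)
We form the Lagrangian of the trust-region problem \Cref{eq:TR_MAXENT_RL} with a multiplier $\eta \ge 0$ for the KL constraint, and show that the penalized objective is, after rescaling, a plain maximum causal entropy objective with a modified reward. Write the per-state entropy as $H(\pi)=\mathbb{E}_\pi[-\log \pi]$ and the trust-region KL as $\mathrm{KL}(\pi\|\pi^{(i)}) = \mathbb{E}_\pi[\log\pi - \log\pi^{(i)}]$. The Lagrangian, to be maximized over $\pi$, is
\begin{align*}
\mathcal{L}(\pi,\eta) &= \mathbb{E}_{\rho_\pi}\!\left[ r(\mathbf{s},\mathbf{a}) - \log\pi(\mathbf{a}|\mathbf{s}) - \eta\log\pi(\mathbf{a}|\mathbf{s}) + \eta \log\pi^{(i)}(\mathbf{a}|\mathbf{s})\right] + \eta\zeta \\
&= (1+\eta)\,\mathbb{E}_{\rho_\pi}\!\left[ \tfrac{1}{1+\eta} r(\mathbf{s},\mathbf{a}) + \tfrac{\eta}{1+\eta}\log\pi^{(i)}(\mathbf{a}|\mathbf{s}) - \log\pi(\mathbf{a}|\mathbf{s})\right] + \eta\zeta .
\end{align*}
For fixed $\eta$, the positive factor $1+\eta$ and the constant $\eta\zeta$ do not change the maximizer. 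The bracket is therefore exactly the maximum causal entropy RL objective $\mathbb{E}_{\rho_\pi}[r_\eta] + H(\pi)$ with reward $r_\eta$ as defined in \Cref{lemma1}. It follows that the maximizer of $\mathcal{L}(\cdot,\eta)$ is the MCE-optimal policy for $r_\eta$, namely the Boltzmann policy $\pi^\star \propto \exp(Q^\star_{r_\eta})$ from the background section.

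It remains to connect the maximizer at the correct multiplier to the trust-region optimum. The plan is to invoke strong duality and KKT conditions.

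\textbf{Concavity in occupancy space.} I would reparametrize the problem in terms of the occupancy $\rho(\mathbf{s},\mathbf{a})$, with $\pi = \rho(\mathbf{s},\mathbf{a})/\sum_{\mathbf{a}'}\rho(\mathbf{s},\mathbf{a}')$ and Bellman-flow linear constraints on $\rho$. In these variables the causal entropy term and $-\mathrm{KL}$ are concave: they are conditional entropies and negative conditional relative entropies of $\rho$. The reward term is linear.

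\textbf{Slater's condition.} The point $\pi=\pi^{(i)}$ gives KL $=0<\zeta$, so the constraint is strictly feasible.

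\textbf{Conclusion from duality.} Strong duality then holds. The optimal $\eta^\star\ge0$ satisfies complementary slackness, and the trust-region optimal policy maximizes $\mathcal{L}(\cdot,\eta^\star)$. By the first step, that maximizer is the MCE-optimal policy for $r_{\eta^\star}$. The case where the constraint is inactive gives $\eta^\star=0$ and recovers $r_0=r$, as expected.

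The step I expect to be the main obstacle is justifying concavity and the exchange of the $\pi$-dependent state distribution. The entropy and KL are weighted by $\rho_\pi(\mathbf{s})$, which itself depends on $\pi$, so concavity in $\pi$ directly fails. The occupancy reparametrization is what fixes this, but it needs care with the discounted normalization and with $\log\pi^{(i)}$ being finite, i.e.\ $\pi^{(i)}$ having full support. Since $\pi^{(i)}$ is itself a Boltzmann policy, full support holds.

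As a fallback that avoids convexity arguments, I would verify the claim directly through stationarity. Take functional derivatives of $\mathcal{L}$ using the soft policy-gradient identity. This gives $\log\pi = \frac{1}{1+\eta}\big(Q\big) + \frac{\eta}{1+\eta}\log\pi^{(i)} + \text{const}$, which is precisely the soft-optimality condition for reward $r_\eta$.
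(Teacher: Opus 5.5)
Your proposal is correct and follows the paper's proof. The paper likewise writes the Lagrangian of \Cref{eq:TR_MAXENT_RL} and splits $-\eta\,\mathrm{KL}(\pi\|\pi^{(i)})$ into $-\eta\log\pi+\eta\log\pi^{(i)}$; it merges the first piece with the entropy into $(1+\eta)H(\pi)$ and factors out $(1+\eta)$ to expose the MCE objective for $r_\eta$, with existence of the right multiplier coming from its strong-duality fact (concave entropy and convex KL in occupancy space, linear flow constraints, Slater). Your explicit Slater point $\pi=\pi^{(i)}$ and your saddle-point argument that the trust-region optimum maximizes $\mathcal{L}(\cdot,\eta^\star)$ simply make that cited step more concrete than the paper does.
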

\begin{theorem}
\label{theorem:existence}
Let $\pi_\text{tr}^{(i+1)}(\mathbf{a}|\mathbf{s})$ denote a trust-region optimal policy for a reward function $\tilde{r}^{(i+1)} = ( \updateOp{i} ) r^{(i)}$ with stepsize $\epsilon$. There exists a positive stepsize $\epsilon_\text{tr} \le \epsilon$, such that $\pi_\text{tr}^{(i+1)}(\mathbf{a}|\mathbf{s})$ is an optimal maximum causal entropy policy with respect to the reward function $r^{(i+1)} = ( \corupdateOp{i} ) r^{(i)}$.
\end{theorem}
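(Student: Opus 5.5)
Our plan is to combine \Cref{lemma1} with the explicit form of the update operator in \Cref{eq:irldm_rew_update}, and to exploit that the previous policy is itself MCE-optimal for $r^{(i)}$. The key point is that, up to state-dependent shifts, the log of a max-ent optimal policy behaves like a reward.

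First, apply \Cref{lemma1} with reward $\tilde r^{(i+1)}$ and reference policy $\pi^{(i)}$. This shows that $\pi_\text{tr}^{(i+1)}$ is MCE-optimal for
\begin{equation*}
r_\eta = \tfrac{1}{1+\eta}\tilde r^{(i+1)} + \tfrac{\eta}{1+\eta}\log\pi^{(i)}, \qquad \eta\ge 0 .
\end{equation*}
Here $\eta$ is the multiplier of the active (or inactive, $\eta=0$) KL constraint.

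Second, we replace $\log\pi^{(i)}$ by $r^{(i)}$. Since $\pi^{(i)}$ is MCE-optimal for $r^{(i)}$, the Boltzmann form from \Cref{sec:background} gives $\log\pi^{(i)}(\mathbf a|\mathbf s) = Q^{\star}_{r^{(i)}}(\mathbf s,\mathbf a) - V^{\star}_{r^{(i)}}(\mathbf s)$. Using the soft Bellman equation, this equals $r^{(i)}(\mathbf s,\mathbf a) + \gamma\mathbb E_{\mathbf s'}[V^\star(\mathbf s')] - V^\star(\mathbf s)$. This is $r^{(i)}$ plus a potential-based shaping term $\Phi(\mathbf s,\mathbf a)=\gamma\mathbb E_{\mathbf s'}[\phi(\mathbf s')]-\phi(\mathbf s)$ with $\phi=V^\star_{r^{(i)}}$.

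We then invoke the standard fact (argued via the soft Bellman equation) that adding a scaled shaping term $c\,\Phi$ changes $Q^\star$ by $-c\,\phi(\mathbf s)$. Hence it leaves the Boltzmann policy, and so the MCE-optimal policy, unchanged. Consequently $\pi_\text{tr}^{(i+1)}$ is MCE-optimal for
\begin{equation*}
\hat r = \tfrac{1}{1+\eta}\tilde r^{(i+1)} + \tfrac{\eta}{1+\eta} r^{(i)} .
\end{equation*}

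Third, we expand $\tilde r^{(i+1)} = r^{(i)} - \epsilon\delta^{(i)}$, which gives
\begin{equation*}
\hat r = r^{(i)} - \tfrac{\epsilon}{1+\eta}\delta^{(i)} = \bigl(\mathcal U^{\rho^{(i)}}_{\epsilon_\text{tr}}\bigr) r^{(i)}, \qquad \epsilon_\text{tr}=\tfrac{\epsilon}{1+\eta}.
\end{equation*}
Since $\eta\ge0$ and $\epsilon>0$, we get $0<\epsilon_\text{tr}\le\epsilon$, which proves the claim. The argument uses that $\delta^{(i)}$ is evaluated at the same $\rho^{(i)}$ in both operators, which holds by definition.

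The main obstacle is making the second step rigorous. Two things need checking. First, the reference policy in the trust region (written $\pi^{(i)}_\text{tr}$ in \Cref{eq:TR_MAXENT_RL}) must be exactly the MCE-optimal policy $\pi^{(i)}$ of $r^{(i)}$. We will state this as an inductive hypothesis, valid at $i=0$ by initialization and propagated by the theorem itself. Second, the shaping invariance must hold in the discounted entropy-regularized setting, including the normalization constant $\log Z_{\mathbf s}$. We would check the latter directly: substitute $Q'=Q^\star+c\,\phi$ into the soft Bellman equation for $r+c\Phi$, verify it is a fixed point, and note uniqueness by contraction. Existence of a finite multiplier $\eta$ relies on strong duality for the convex trust-region problem in occupancy space, which we take from \Cref{lemma1}.
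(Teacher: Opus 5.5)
Your proposal is correct and follows essentially the same route as the paper's proof. Both apply \Cref{lemma1}, rewrite $\log\pi^{(i)}$ via the Boltzmann form and soft Bellman equation as $r^{(i)}$ plus a potential-based shaping term, remove that term by shaping invariance, and collect terms to obtain $\epsilon_\text{tr}=\epsilon/(1+\eta)$. Your explicit inductive hypothesis that the trust-region reference policy $\pi^{(i)}_\text{tr}$ is MCE-optimal for $r^{(i)}$, and your direct fixed-point check of soft shaping invariance, make precise two points the paper uses only implicitly. One sign slip to fix: the fixed point for $r+c\,\Phi$ is $Q^\star-c\,\phi$, as you state earlier, not $Q^\star+c\,\phi$ as in your last paragraph.
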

Proofs for \Cref{lemma1} and \Cref{theorem:existence} are in \Cref{appendix:proofs}. Assuming we know an optimal policy $\pi^{(i)}(\mathbf{a}|\mathbf{s})$ for the reward function $r^{(i)}(\mathbf{s}, \mathbf{a})$, \Cref{theorem:existence} shows that $\pi_\text{tr} \text{ on } \left( \updateOp{i} \right) r^{(i)} \equiv \pi_\text{MCE} \text{ on } \left( \corupdateOp{i} \right) r^{(i)}$. Here $\epsilon_\text{tr} = \nicefrac{\epsilon}{\left( 1+\eta \right)}$ is a step size smaller than the initial step size on which the trust-region optimal policy was computed, and $\eta \geq 0$ is the Lagrangian multiplier associated with the trust region constraint in \Cref{eq:TR_MAXENT_RL}. \Cref{theorem:existence} can be used to construct an IRL procedure where we first use a large step size to get the reward function $\tilde{r}^{(i+1)}$, then compute a trust region optimal policy using this, and subsequently correct the updated reward function using a new step size $\epsilon_\text{tr}$. The corrected reward function is such that the previously computed trust region optimal policy is MCE-optimal on it. \Cref{fig:correction} illustrates this procedure and compares it with maximum causal entropy IRL. Starting from constant rewards and a uniform policy ($r^{(0)}$ ; $\pi^{(0)}$), this subroutine can be iterated to monotonically improve on the dual objective solely based on trust-region updates on the policy. \Cref{alg:rewardCorrection} shows an overview of \textit{Trust Region Inverse Reinforcement Learning (TRIRL)} and \cref{fig:global_reward} demonstrates it in a discrete setting.
\begin{algorithm}[t!]
  \caption{Trust Region Inverse RL} \label{alg:rewardCorrection}
  \begin{algorithmic}[1]
    \STATE {\bfseries Initialize:} $\epsilon$ ; $r^{(0)} = 0.0$ ; $\pi^{(0)} = \text{unif.}$
    \STATE {\bfseries Output:} $r^\star$ and $\pi^\star$
    \REPEAT
    \STATE rollout $\pi^{(i)}$ ; learn $D^{(i)} \approx \log \left(\nicefrac{\rho_{E}}{\rho_{\pi^{(i)}}} \right)$
    \STATE $\tilde{r}^{(i+1)} = (1-\epsilon) r^{(i)} + \epsilon \beta D^{(i)}$ ~\cref{eq:irldm_rew_update}
    \STATE $\pi_\text{tr}^{(i+1)}$ \& $\eta^{(i+1)} \gets$ trust region policy update
    \STATE $\epsilon_\text{tr} = \nicefrac{\epsilon}{\left( 1 + \eta^{(i+1)} \right)}$
    \STATE $r^{(i+1)} = (1-\epsilon_\text{tr}) r^{(i)} + \epsilon_\text{tr} \beta D^{(i)}$
    \STATE $\pi_\text{tr} \text{ on } \tilde{r}^{(i+1)}\equiv \pi_\text{MCE} \text{ on } r^{(i+1)}$
    \STATE $r^{(i)} \gets r^{(i+1)}$ ; $\pi^{(i)} \gets \pi_\text{tr}^{(i+1)}$
    \UNTIL{converged}
  \end{algorithmic}
\end{algorithm}
\section{Practical Considerations}
\label{subsec:practical_updates}
In this section, we discuss practical considerations needed for applying TRIRL to real-world problems. These involve estimating the log density ratio used in the reward update (\Cref{eq:irldm_rew_update}), realizing function-space updates on parametric reward functions, and performing the trust-region policy updates. Furthermore, we discuss important properties, namely the ability to learn from observations and to re-optimize the reward function after changes to the system dynamics. 
\subsection{Density Ratio Estimation}
In continuous state spaces, the log density ratio $\log \left( \nicefrac{\expOcc}{\policyOccIter{i}} \right)$ cannot be computed analytically and must be approximated using a neural network. Following previous works, we train a binary classifier $D$ to distinguish between expert and agent samples. Given infinite data and an optimally trained $D$, the log density ratio is equivalent to the logits of $D$~\citep{Menon2016}. While adversarial methods like GAIL directly use $\log \sigma(D(.))$ as a reward function, we use the logits to perform a function-space reward update. Consequently, unlike adversarial methods, we do not require a fine-tuned balance between the classifier and policy updates. Where adversarial IRL methods typically fail with a perfect discriminator \citep{diwan2025noise}, our method always benefits from having a highly accurate classifier. An accurate classifier further improves the practical accuracy of the monotonic improvement guarantees of \Cref{eq:irldm_rew_update}, wherein the reward is a global solution of the MCE-IRL Lagrangian and the policy is a global optimizer of this reward.
\subsection{Circular Buffer of Discriminators}
\label{subsec:disc_buffer}
An additional challenge arises from the fact that we perform updates in function space rather than parameter space. While the functional update has been shown to be more efficient than parametric gradient descent~\citep{arenz2016optimal}, the recursive application of the update in \Cref{eq:irldm_rew_update} results in a reward function that explicitly depends on the entire history of trained discriminators. Maintaining this history is computationally prohibitive, particularly regarding the memory footprint. Furthermore, imposing inductive biases (e.g., action invariance) on the aggregate reward function would require enforcing them on every individual discriminator.

To circumvent these limitations, one could project the reward function onto a parametric model at every iteration via a regression problem. However, such projections introduce approximation errors that may affect training stability. Instead, we propose a middle ground: we maintain a fixed buffer of the $k$ most recent discriminators along with a parametric reward function $R_{\text{fit}}^{(i-k)}$ that was fitted at iteration $i-k$. Due to the exponential decay of past coefficients caused by the recursive updates, the approximation errors of the fitted reward function have limited effect on the overall rewards, allowing us to impose structural priors on the parametric model without sacrificing the precision of the most recent functional updates.

We illustrate the discriminator buffer in \cref{fig:buffer_illustration}. While this does entail added computational effort, these operations are highly parallelizable and are completed relatively quickly by leveraging jit-compilation. An evaluation of the effects on runtime and memory consumption can be found in \Cref{appendix:ablations}.
\begin{figure}[t!]
    \centering
    \includegraphics[width=\columnwidth]{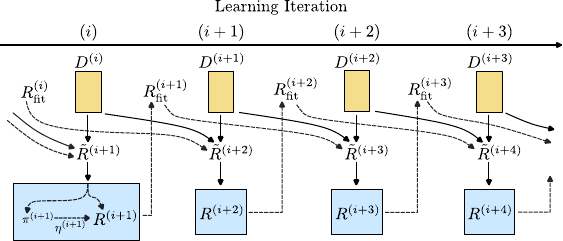}
    \caption{An illustration of a discriminator buffer of size~$k~=~2$. Given a fitted reward $R_{\text{fit}}^{(i-k)}$ and discriminators $\{ D^{(i)} \}_{i-k}^{i}$, intermediate uncorrected rewards $\tilde{R}^{(i+1)}$ are computed by repeated application of line 8 in \cref{alg:rewardCorrection}. Then, a trust region policy $\pi^{(i+1)}$ is learnt, and the final corrected reward $R^{(i+1)}$ is computed using line 5 in \cref{alg:rewardCorrection}.}
    \label{fig:buffer_illustration}
\end{figure}
\subsection{Trust-Region Optimal Policy Updates}
\label{subsec:tr_policies}
Notice that \Cref{theorem:existence} only necessitates reward correction based on multipliers $\eta$ and a policy that is optimal w.r.t the trust region corresponding to $\eta$. However, there is no additional constraint on the actual trust region bound to which $\eta$ corresponds. Hence, \Cref{theorem:existence} can be applied naively by treating $\eta$ as a hyperparameter and enforcing the trust region constraint under expectation through an auxiliary loss $\mathcal{L}_{\textrm{tr}} \triangleq \eta \; \textrm{KL}\Big(\pi||\pi^{(i)}_{\text{tr}}\Big)$. We note that it is theoretically sound to optimize such an expected KL penalty (instead of a hard constraint).

However, specifying $\eta$ arbitrarily is difficult since its effect depends on the scale of the reward function. Instead, it is more convenient to explicitly specify a trust region and strictly enforce it for better training stability. To ensure such explicit trust region satisfaction, and an $\eta$ corresponding to the current optimization landscape, we propose to leverage differentiable trust-region projection layers~\citep[TRPL]{otto2021differentiable}. TRPL parametrizes the policy as a state-dependent Gaussian distribution $\pi(\cdot \vert \mathbf{s}) = \mathcal{N}(\mu_{\theta}(\mathbf{s}), \Sigma_{\phi})$ and exactly enforces the analytical reverse KL trust region by the application of a projection layer during policy optimization. The projection layer maps every violating policy prediction back into the trust region such that the projected policy parameters $\left( \tilde{\mu_{\mathbf{s}}}, \tilde{\Sigma} \right)$ satisfy a trust-region constraint around the previous policy while simultaneously minimizing the distance to policy predictions $\left( \mu_{\theta}(\mathbf{s}), \Sigma_{\phi} \right)$. TRPL uses separate bounds for the mean and covariance and obtains Lagrangian multipliers $\eta_{\mu}(\mathbf{s}) \geq 0$ and $\eta_{\Sigma} \geq 0$. However, as we require a single, scalar multiplier $\eta$, we compute the maximum over the state-dependent multipliers for mean and covariance, $\eta~=~\underset{\textrm{batch}}\max \{ \eta_{\mu}(\mathbf{s}) , \eta_{\Sigma} \}$, resulting in a more conservative step size. While these design choices enable us to strictly enforce trust-region satisfaction, we note that this variant slightly deviates from our theory, in that the trust-region constraint is satisfied for every state individually, instead of enforcing it in expectation as in \Cref{eq:TR_MAXENT_RL}. For further details on TRPL, we refer to \Cref{appendix:tr_projection} and the original work~\citep{otto2021differentiable}. 
Both variants, (i) treating $\eta$ as penalty coefficient (\textit{TR loss} version) and (ii) computing it using TRPL ($\max \, \eta$ version) use PPO for policy optimization.

We further evaluate a variant based on TRPL with an interesting update mechanic: instead of using the maximum eta over all states, we use state-dependent step sizes---if a larger $\eta$ is required to satisfy a trust-region in a given state, this variant will result in smaller changes to the rewards of that state. In combination with the circular buffer of discriminators, we can even use this mechanism to adapt step sizes in hindsight. Here, we need to additionally keep track of past policies to recompute past step sizes based on the given state. We will refer to this variant as \textit{retrospective}-$\eta$.
\subsection{Learning from Observations \& Transfer Learning}
\label{subsec:state_only_trirl}
Finally, we highlight that TRIRL is a general framework and can accommodate a variety of discriminator architectures. TRIRL can be used in observation-based imitation settings, where we do not have access to the states and actions of the expert, but only to observed features. As a special case, it can be used for state-only observations, potentially using a discriminator based on state transitions, $D(\mathbf{s},\mathbf{s}')$~\citep{torabi2018generative}. Further, TRIRL directly learns reward functions that can be globally optimized, enabling us to re-optimize the policy to adapt to changes in the system dynamics. This distinguishes it from AIRL~\citep{fu2017learning} and related methods, where the reward function is ``entangled'' with the log-density of the learnt policy (the advantage function). We demonstrate learning from observations through motion-capture imitation experiments on humanoid robots, and global, transferable reward learning in \cref{subsec:global_rewards}.
\begin{figure*}[t!]
    \centering
    \includegraphics[width=\textwidth]{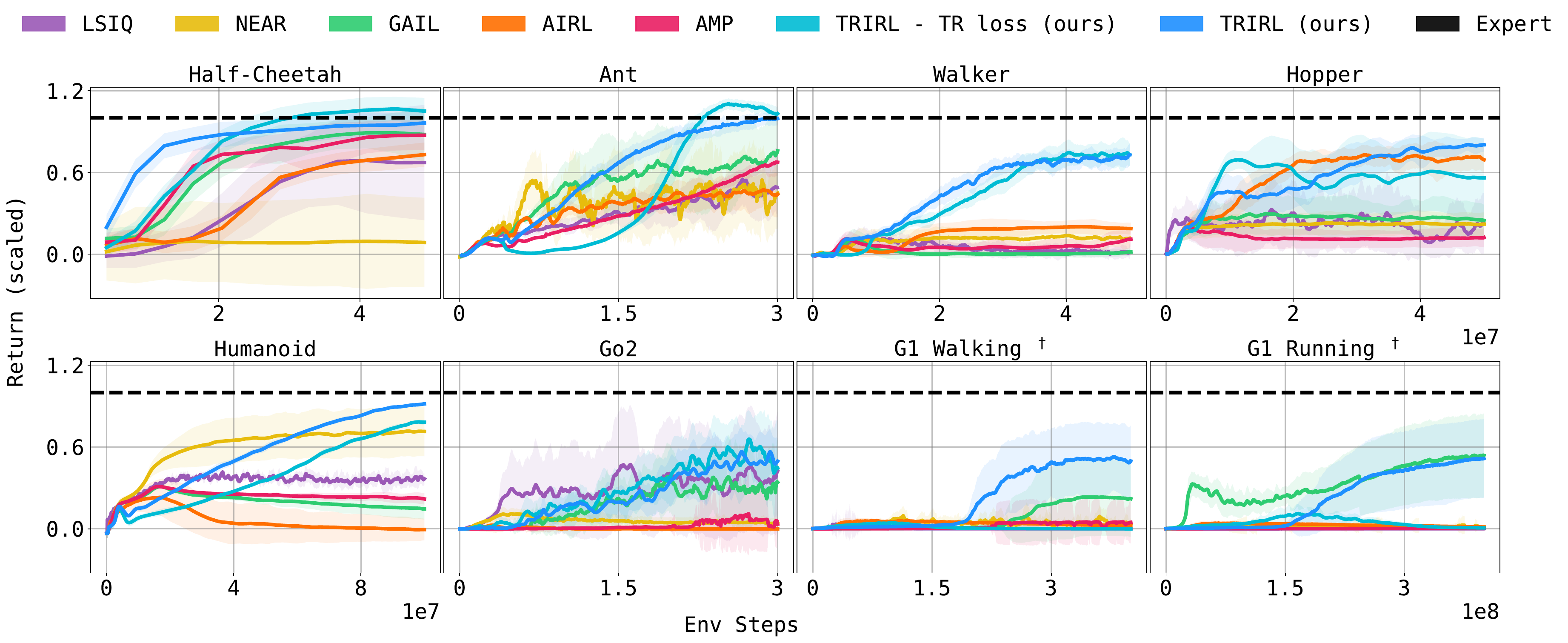}
    \caption{Imitation learning results on Mujoco benchmarks and robotics tasks. $^\dagger$ The G1 tasks use mocap demonstrations where only the expert's observations are available.}
    \label{fig:results}
\end{figure*}
\section{Experiments}
\label{sec:experiments}
Through our experiments and ablation studies, we aim to answer the following questions:
\begin{enumerate}
    \item How does TRIRL compare to prominent prior works in complex imitation learning settings? 
    \item Is there any advantage to computing Lagrangian multipliers retrospectively? What is the impact of reward fitting, the TR loss, TRPL, and the discriminator buffer on performance?
    \item Can TRIRL learn a global reward function? Is this reward also transferable?
\end{enumerate}
\begin{figure*}[t!]
    \centering
    \includegraphics[width=\textwidth]{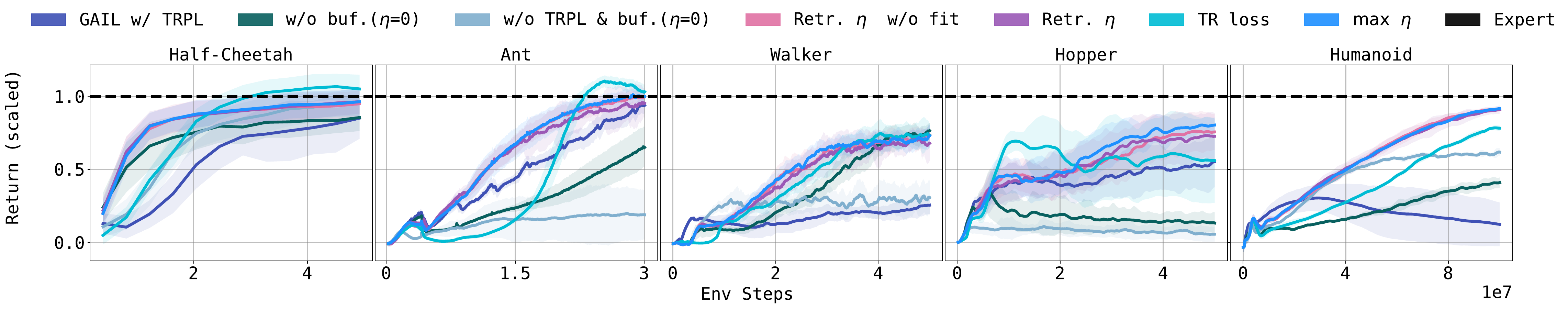}
    \caption{An ablation study comparing the relative performance of TRIRL's variants on Mujoco benchmarks.}
    \label{fig:ablation}
\end{figure*}
We conduct experiments on continuous control tasks on the following Mujoco benchmarks: Half-Cheetah, Ant, Walker, Hopper, Humanoid; as well as more complex robotics settings: Unitree G1 Walking/Running, Unitree Go2 Locomotion. A PPO policy trained till convergence is used as the expert and 30 demonstration trajectories are collected per task. In the Unitree G1 environments, we use the LocoMujoco \citep{alhafez2023b} motion capture dataset and train using a state-based discriminator $D(\mathbf{s}, \mathbf{s}')$. The motion capture datasets for walking and running contain approximately 35 and 9 trajectories, respectively (1000 step horizon). We compare TRIRL with the following prior works: GAIL \citep{ho2016generative}, AIRL \citep{fu2017learning}, AMP \citep{peng2021amp}, LSIQ \citep{al2023ls}, NEAR \citep{diwan2025noise}, and SFM \citep{jainnon}. Together, these baselines enable a systematic comparison of our method against adversarial and non-adversarial approaches, state-based and state-only variants, as well as methods focused on reward recovery. All methods are tuned separately for each task and we report the mean performance across 20 independent seeds. Due to space constraints, we defer secondary results (ablations, runtime/memory comparisons, data scaling) to \Cref{appendix:additional_experiments,appendix:ablations}, and provide experimental details and baseline descriptions in \Cref{appendix:experiments}.

\Cref{fig:results} shows that TRIRL matches and, in most cases, outperforms all baselines in all environments. Further, the TR loss version of TRIRL---which is theoretically sound---also often outperforms most baselines, albeit performing poorly on harder robotics tasks like the G1 due to its weaker trust region enforcement. 
\begin{table*}[t]
\centering
\caption{A comparison of normalized performance during training, retraining from scratch with the learnt reward, and retraining from scratch with the learnt reward on an environment with changed dynamics. For both baselines, we find that only a fraction of the seeds converged with any meaningful performance (notice the high variance). This is likely because of different training - retraining initialization, and their reward being only local. The global reward learnt by our method, however, ensures convergence much more reliably. \textsuperscript{\dag} For NEAR, training and re-training performance are equivalent.}
\label{tab:global_and_transfer_rewards}
\resizebox{\textwidth}{!}{%
\begin{tabular}{lccccccccc}
\toprule
\multirow{2}{*}{\textbf{Task}} 
& \multicolumn{3}{c}{\textbf{Training}} 
& \multicolumn{3}{c}{\textbf{Retraining}} 
& \multicolumn{3}{c}{\textbf{Transfer}} \\
\cmidrule(lr){2-4} \cmidrule(lr){5-7} \cmidrule(lr){8-10}
& \textbf{TRIRL} & \textbf{AIRL} & \textbf{NEAR}
& \textbf{TRIRL} & \textbf{AIRL} & \textbf{NEAR}
& \textbf{TRIRL} & \textbf{AIRL} & \textbf{NEAR} \\
\midrule
Point Maze 
& $\bm{1.03 \pm 0.01}$ & $0.45 \pm 0.12$ & $0.28 \pm 0.09$
& $\bm{0.98 \pm 0.01}$ & $0.35 \pm 0.07$ & $0.28 \pm 0.09$
& $\bm{0.96 \pm 0.001}$ & $0.06 \pm 0.64$ & $0.29 \pm 0.13$ \\

Ant 
& $\bm{0.91 \pm 0.17}$ & $0.59 \pm 0.25$ & $0.46 \pm 0.29$
& $\bm{0.63 \pm 0.09}$ & $0.10 \pm 0.13$ & $0.46 \pm 0.29$
& $\bm{0.89 \pm 0.12}$ & $0.42 \pm 0.25$ & $0.33 \pm 0.18$ \\

Half Cheetah 
& $\bm{0.83 \pm 0.19}$ & $0.39 \pm 0.14$ & $0.09 \pm 0.28$
& $\bm{0.70 \pm 0.24}$ & $0.08 \pm 0.28$ & $0.09 \pm 0.28$
& \begin{tabular}[c]{@{}c@{}}(W) $\bm{0.63 \pm 0.29}$ \\ (MG) $\bm{0.30 \pm 0.13}$\end{tabular}
& \begin{tabular}[c]{@{}c@{}}(W) $0.16 \pm 0.25$ \\ (MG) $-0.10 \pm 0.06$\end{tabular}
& \begin{tabular}[c]{@{}c@{}}(W) $0.10 \pm 0.18$ \\ (MG) $-0.06 \pm 0.12$\end{tabular} \\

Hopper 
& $0.49 \pm 0.16$ & $\bm{0.68 \pm 0.11}$ & $0.22 \pm 0.09$
& $\bm{0.36 \pm 0.13}$ & $0.12 \pm 0.11$ & $0.22 \pm 0.09$
& --- & --- & --- \\
\bottomrule
\end{tabular}
}
\end{table*}
Next, we evaluate the relative significance of the components of TRIRL by modifying its policy optimization and reward correction steps. We carry out ablation experiments on the same Mujoco benchmarks and compare the following ablated configurations:
\[
\begin{array}{ll}
\text{\textcolor[HTML]{1E90FF}{(i)}} \max \eta & \text{\textcolor[HTML]{00BCD4}{(ii)} TR loss} \\[3pt]
\text{\textcolor[HTML]{9B59B6}{(iii)} retrospective } \eta &
\makecell[l]{\text{\textcolor[HTML]{E072A4}{(iv)} retrospective } \eta\\ \text{w/o reward fitting}} \\[3pt]
\text{\textcolor[HTML]{08605F}{(v)} w/o disc. buffer} & \text{\textcolor[HTML]{80AFCE}{(vi)} w/o TRPL \& disc. buffer} \\[3pt]
\text{\textcolor[HTML]{3F51B5}{(vii)} GAIL w/ TRPL} & \\
\end{array}
\]
\Cref{fig:ablation} leads to several interesting observations. First, we find that in most cases, the $\max \, \eta$ variant of TRIRL outperforms all other variants. While this variant slightly deviates from our theory, the strict trust region enforcement of TRPL and the associated non-arbitrary lagrangian multipliers, indeed offer increased stability and performance. Moreover, we see that retrospectively computing state-dependent Lagrangian multipliers usually does not yield significant performance improvements. It turns out that taking a $\max$ over previously computed Lagrangian multipliers is typically conservative enough to ensure trust region satisfaction on a new rollout batch. In this ablation, we also examine how fitting the reward function onto a parametric model affects performance. As expected, such fitting has a small effect on imitation learning performance. Further, we observe that the TR loss variant of our method also performs well in Mujoco benchmarks. This version, being less computationally demanding, is a reasonable alternative in such simpler settings. Finally, ablated configurations (v), (vi), and (vii) show the contribution of the discriminator buffer and the trust region projection layer. Configuration (v) removes reward correction by replacing TRIRL's reward function with an \textit{uncorrected} interpolation of a buffer of discriminators (with $\eta = 0$). Configuration (vi) just boils down to GAIL with the same \textit{uncorrected} rewards. Configuration (vii) is GAIL with TRPL for policy optimization. The poor performance of these ablated configurations shows that our method's improvements aren't simply rooted in TRPL \citep{otto2021differentiable} or the buffer of discriminators. Instead, we attribute TRIRL's improvements to the explicit optimization of the dual objective.
\subsection{Global \& Transferable Rewards}
\label{subsec:global_rewards}
Here, we show that TRIRL learns a global reward function, i.e., one that can be re-optimized from scratch, starting from a new random agent initialization. \Cref{fig:global_reward} demonstrates this in a discrete setting. In continuous settings, we rely on function approximation to learn rewards. In these environments, we demonstrate global reward learning by freezing a trained reward network and re-optimizing it from scratch with PPO. Crucially, this retraining is done on a new set of seeds to ensure that the agent is initialized differently than during training. The same setup is also used to evaluate the transferability of the learnt reward. Ideally, a global reward function captures the expert's intrinsic motivations (e.g. moving forward), rather than rewarding the agent just for duplicating the specific state transitions executed by the expert. Such a reward should also transfer to a different environment with similar goals. We test this by re-optimizing the learnt reward on an environment with changed dynamics. Specifically, we use the Point Maze Flipped and Ant Disabled environments from \cite{fu2017learning}, where the maze is flipped and the dynamics of the Ant task are changed by shortening and disabling the agent's front legs. We further add two new transfer tasks, Half Cheetah Windy (W) and Half Cheetah Mars Gravity (MG), where a constant $5$m/s wind blows against the agent, and the gravitational constant is changed to $3.73m/s^2$. We evaluate a feature-based variant of our method with a feature encoder, and a reward function that is linear in these features. \Cref{tab:global_and_transfer_rewards} shows comparisons against AIRL \citep{fu2017learning} and NEAR \citep{diwan2025noise}, both of which claim to learn a global reward function. TRIRL outperforms baselines in both retraining and transfer settings, while ensuring the best training performance.

\section{Discussion and Outlook}
\label{sec:discussion}
We show that trust region policy updates (primal optimization), followed by reward (dual) correction, result in monotonic improvement of the distribution matching IRL objective. Given this, we present \textit{Trust Region Inverse Reinforcement Learning (TRIRL)}, a non-adversarial IRL method that achieves monotonic improvement of the reward function and policy, without having to solve a full RL problem at each iteration of IRL (like classical IRL methods). Practically, our method offers stable training characteristics and is capable of recovering global reward functions that are robust to changes in system dynamics. 

While our method outperforms strong baselines, there are a few limitations and failure modes to consider. First, we note that the theoretical guarantees discussed in our work do not perfectly translate to real-world settings with function approximation. Discriminator-based density ratio estimation can introduce approximation errors due to data and sampling limitations, and using a discriminator buffer further increases VRAM requirements---though its impact is minimal as shown in \Cref{appendix:additional_experiments}. We also clarify that, although our experiments use TRPL to learn trust-region policies, our method is not limited to Gaussian policies or to TRPL's computationally expensive trust-region update. In practice, solving the MaxEnt RL problem in \Cref{alg:rewardCorrection} only requires a likelihood-based policy so that the entropy can be evaluated. Our theory (\Cref{theorem:existence}) provides a general framework for learning MCE-optimal IRL policies, and we also present a trust-region loss variant of our method that often outperforms the baselines. Finally, the IRL problem is ill-posed due to temporal credit assignment: different reward functions may induce the same optimal policy in a given environment, but lead to different behavior under different dynamics \citep{ng1999policy}. Like prior work, our method is also subject to this limitation. However, in \Cref{appendix:additional_experiments}, we briefly highlight that this issue can be addressed by instilling inductive biases into the problem, which prior work such as \cite{ho2016generative,fu2017learning} cannot accommodate because it requires access to specific expert states/actions. In terms of failure modes, our method could fail to converge with very lax trust region bounds and take prohibitively long with very conservative ones. In the TR loss version, where $\eta$ is a hyperparameter, this is also not intuitively tunable. Future work in such trust region based IRL methods could focus on extending this procedure to general $f$-divergences, and studying the properties of function-space reward updates in detail.

\section*{Software and Data}
To aid reproducibility, the full algorithm and ablation studies are given in \Cref{appendix:additional_experiments}, hyperparameters are listed in \Cref{appendix:hyperparams}, and specific implementation details are provided in \Cref{appendix:implementation,appendix:tr_projection,subsec:absorbing_states}. Code \& supplementary material are available at \href{https://anishhdiwan.github.io/trust-region-irl/}{https://anishhdiwan.github.io/trust-region-irl/}.

\section*{Acknowledgements}
This research has been partially supported by the German Research Foundation DFG within RTG 2761 LokoAssist (Grant no.\ 450821862) and INTENTION (Grant no.\ 506123304), and the German Federal Ministry of Research, Technology and Space (BMFTR) under the Robotics Institute Germany (RIG). This project has been supported by a hardware donation by NVIDIA through the Academic Grant Program. Calculations for this research were conducted on the Lichtenberg high-performance computer of the TU Darmstadt.

\section*{Impact Statement}
This paper presents work whose goal is to advance the field of Machine Learning. There are many potential societal consequences of our work, none of which we feel must be specifically highlighted here.

\clearpage
\bibliography{trust_region_IRL.bib}
\bibliographystyle{icml2026}

\crefalias{section}{appendix}
\crefalias{subsection}{appendix}
\crefalias{subsubsection}{appendix}
\newpage
\appendix
\onecolumn
\input{appendix.tex}

\end{document}

%% file: math_commands.tex
\usepackage{amsmath,amsfonts,bm}

\def\eqref#1{equation~\ref{#1}}

\def\1{\bm{1}}

\DeclareMathAlphabet{\mathsfit}{\encodingdefault}{\sfdefault}{m}{sl}
\SetMathAlphabet{\mathsfit}{bold}{\encodingdefault}{\sfdefault}{bx}{n}

\DeclareMathOperator*{\argmax}{arg\,max}

%% file: appendix.tex
\newpage
\input{appendix_proofs.tex}
\section{Additional Results}
\label{appendix:additional_experiments}
\FloatBarrier 
In this section, we present auxiliary experimental results that support our claims. \Cref{tab:sfm_results} compares our method with Successor Feature Matching (SFM) \citep{jainnon}. \Cref{tab:iqm_comparison} compares aggregate normalized inter-quartile mean and VRAM usage. \Cref{fig:runtimes} compares wall-clock training times between TRIRL and baselines in Mujoco benchmarks and robotics imitation learning settings. As described further in \cref{appendix:implementation}, we reimplement all methods (including baselines) to leverage jit-compilation in Jax and use a parallelized RL simulator to further reduce training times. We find that adversarial IL baselines like GAIL \citep{ho2016generative}, AIRL \citep{fu2017learning}, and AMP \citep{peng2021amp} train faster in comparison to TRIRL. The primary computational costs of our method arise from two components: (i) the reward correction step that uses a buffer of discriminators, and (ii) TRPL for policy optimization (which uses a numerical solver). The runtime impact of the reward correction step is illustrated by comparing adversarial methods with the TR loss variant of TRIRL (\cref{subsec:tr_policies}). Although this variant incurs a modest increase in runtime, it already outperforms most baselines in terms of performance. Among the TRIRL variants, the TR loss version is also the fastest because of its simpler policy update. This is followed by the $\max \; \eta$ and retrospective $\eta$ versions. NEAR \citep{diwan2025noise} is roughly comparable to TRIRL because of the high computational burden of learning an energy-based reward via score matching. In our experiments, LSIQ \citep{al2023ls} takes much longer to train because we intentionally reduce RL parallelization in favour of improved SAC performance. The limited ability to scale with parallelized environments is a key limitation of SAC-based imitation learning methods and ultimately constrains their applicability to complex environments such as the Unitree G1.
\begin{table}[t]
\centering
\caption{We compare our method against Successor Feature Matching (SFM) \citep{jainnon} with a forward dynamics model (FDM) for base features and a TD7 policy (best reported configuration in the paper). We used the official SFM codebase, tuned SFM following the guidelines in the paper, and trained it till convergence. Learning successor features in our complex robotics environments (Go2/G1) requires substantial engineering effort, which we leave for upcoming future work.}
\label{tab:sfm_results}
\resizebox{0.5\textwidth}{!}{%
\begin{tabular}{lccc}
\toprule
\textbf{Task} & \textbf{TRIRL }$\mathbf{\max \eta}$ & \textbf{TRIRL TR loss} & \textbf{SFM} \\
\midrule
Ant          & $\mathbf{1.00 \pm 0.05}$ & $1.03 \pm 0.06$ & $0.36 \pm 0.07$ \\
Half Cheetah & $\mathbf{1.05 \pm 0.04}$ & $0.96 \pm 0.04$ & $\mathbf{1.07 \pm 0.07}$ \\
Walker       & $\mathbf{0.73 \pm 0.08}$ & $0.72 \pm 0.08$ & $0.51 \pm 0.21$ \\
Hopper       & $\mathbf{0.81 \pm 0.06}$ & $\mathit{0.56 \pm 0.29}$ & $\mathit{0.56 \pm 0.22}$ \\
Humanoid     & $\mathbf{0.92 \pm 0.03}$ & $0.78 \pm 0.05$ & $0.70 \pm 0.08$ \\
\bottomrule
\end{tabular}
}
\end{table}
\begin{table}[h]
\centering
\caption{Aggregate normalized inter-quartile mean (IQM) with 95\% CI (higher is better) and aggregate VRAM usage (lower is better) across all imitation learning tasks. $^\dagger$: LSIQ has higher memory usage in our experiments because of the larger replay buffers. SFM is omitted due to only being evaluated in Mujoco environments (where it underperforms our method).}
\label{tab:iqm_comparison}
\resizebox{0.65\textwidth}{!}{%
\begin{tabular}{lllll}
\hline
\multicolumn{1}{c}{\textbf{Algorithm}} &
  \multicolumn{1}{c}{\textbf{IQM}} &
  \multicolumn{1}{c}{\textbf{95 \% CI low}} &
  \multicolumn{1}{c}{\textbf{95 \% CI high}} &
  \multicolumn{1}{c}{\textbf{Memory (Gb)}} \\ \hline
TRIRL ${\max \; \eta}$ (ours) & \textbf{0.7881} & 0.7786 & 0.7972 & 7.5774          \\
TRIRL - TR loss (ours)        & \textbf{0.6293} & 0.5794 & 0.6796 & 7.5774 \\
GAIL                          & 0.3297          & 0.2725 & 0.3868 & 1.1192 \\
LSIQ                          & 0.1795          & 0.1494 & 0.2134 & 3.6924 $^\dagger$ \\
AIRL                          & 0.1694          & 0.1467 & 0.1928 & 1.2744  \\
AMP                           & 0.1167          & 0.0985 & 0.1418 & \textit{1.0105} \\
NEAR                          & 0.0986          & 0.0818 & 0.1184 & 2.5355 \\ \hline
\end{tabular}%
}
\end{table}
\Cref{fig:training_stability,fig:disc_update_ratio} highlight the improved training stability offered by our method. We find that our method is also much more robust to overtrained discriminators. Finally, \Cref{alg:trirl} shows an extend, practical pseudocode of our method and \Cref{fig:point_maze} shows the learnt reward function in the point maze task.
\begin{figure}[ht]
    \centering
    \includegraphics[width=0.7\textwidth]{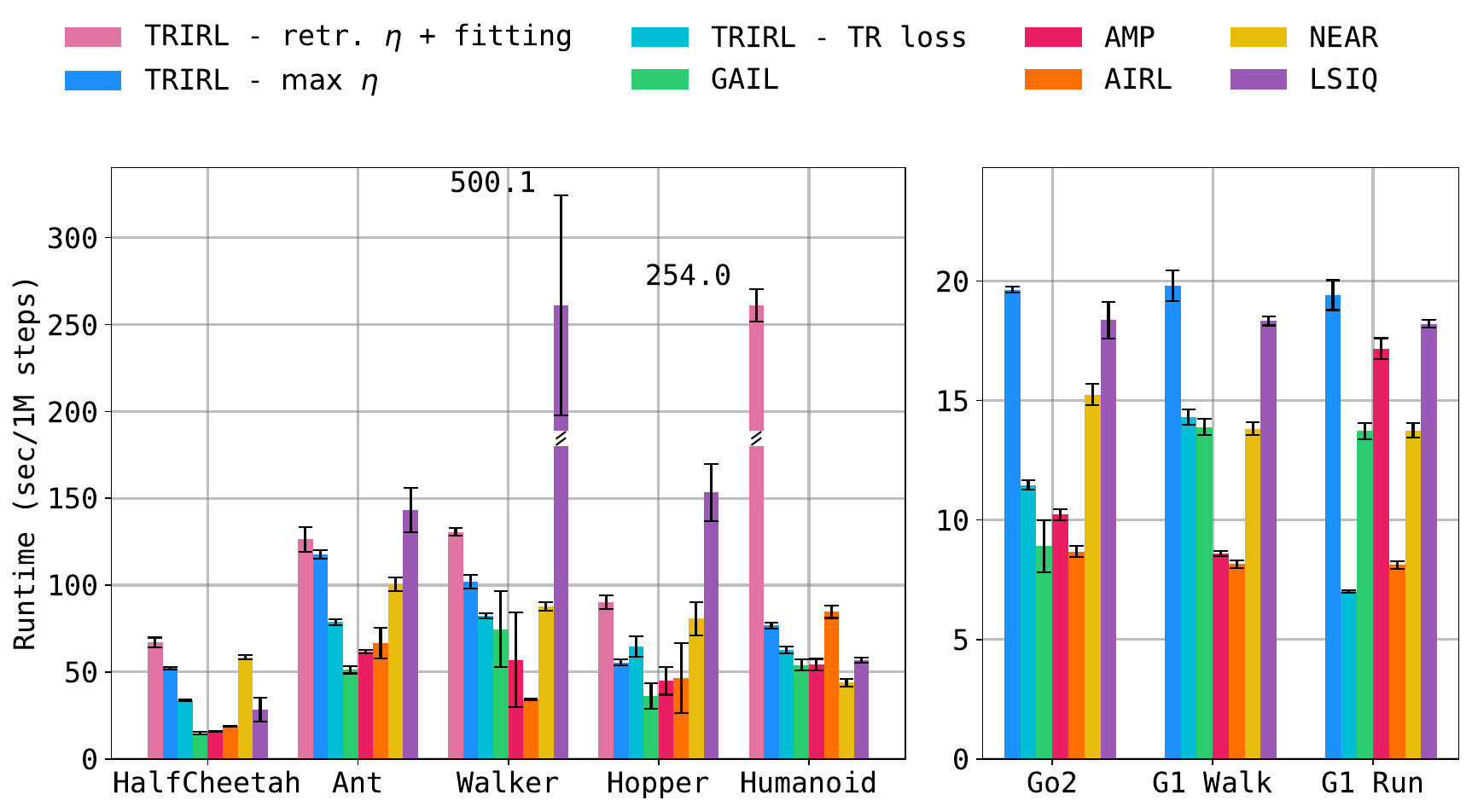}
    \caption{A comparison of the runtimes of all methods. Except for LSIQ on the G1 tasks, all other methods were trained on an RTX 3090 GPU. An RTX PRO 6000 Blackwell was used to accommodate the large replay buffer and batch sizes needed for LSIQ on the complex G1 environment (\cref{appendix:hyperparams}).}
    \label{fig:runtimes}
\end{figure}
\begin{figure}[ht]
    \centering
    \includegraphics[width=0.42\textwidth]{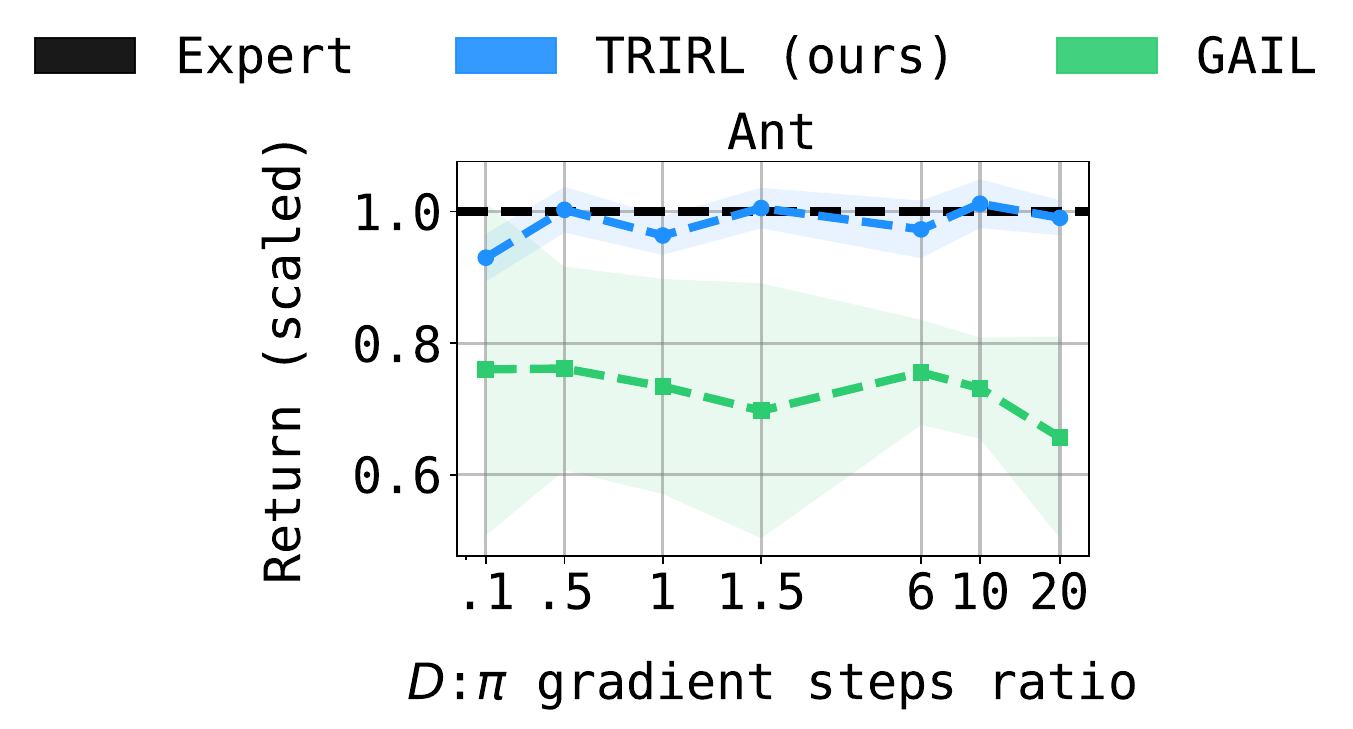}
    \caption{TRIRL is stable and highly performant across a wide range of hyperparameter values. Here, we show that TRIRL has stable performance, even with a near-perfect discriminator. In contrast, adversarial methods like GAIL are highly sensitive and typically fail due to the sharp decision boundaries induced by perfect discrimination.}
    \label{fig:disc_update_ratio}
\end{figure}
\begin{figure}[tb]
    \centering
    \includegraphics[width=0.54\textwidth]{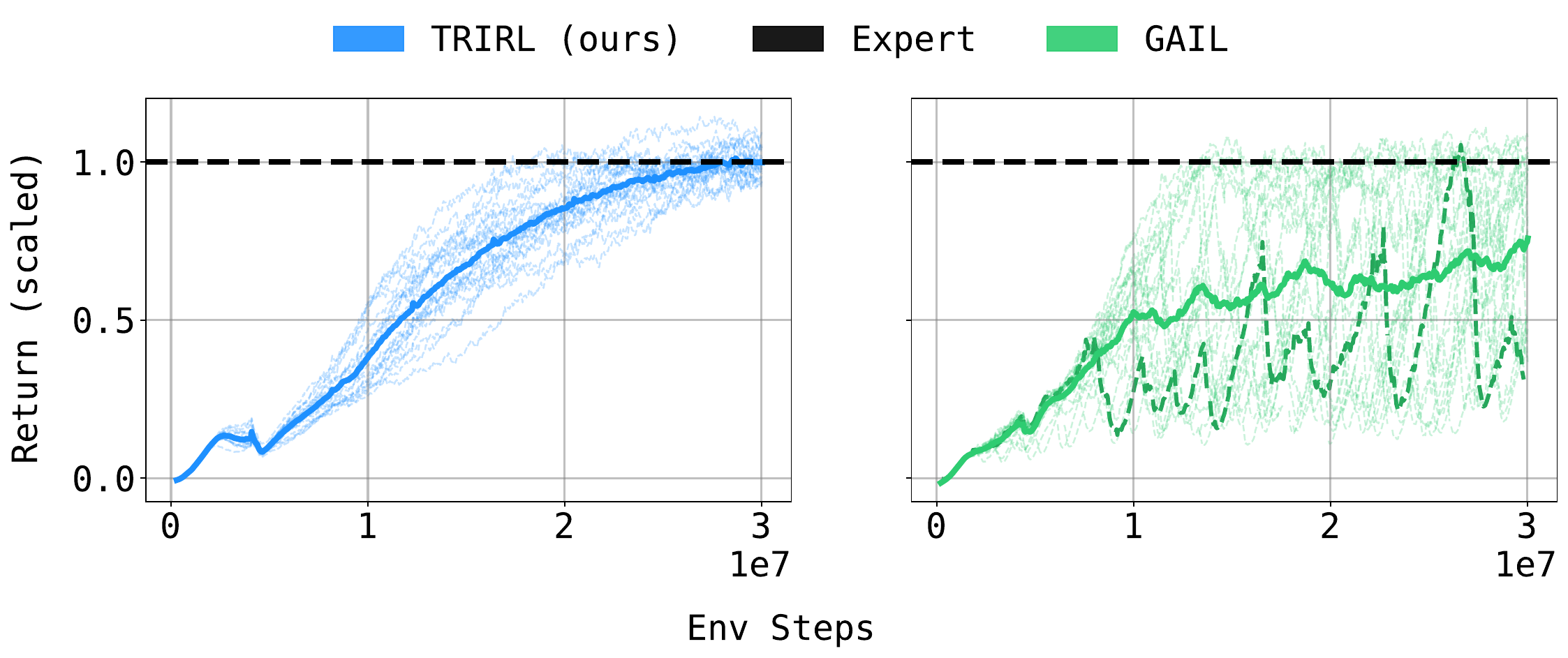}
    \caption{To underscore the monotonic performance improvement offered by our method, we plot all seeds from the Ant imitation learning experiment. TRIRL has much more stable and consistent training, and its performance grows approximately monotonically. In contrast, owing to its local rewards and suboptimal policies, GAIL arbitrarily fluctuates in performance during training (example seed in dark green).}
    \label{fig:training_stability}
\end{figure}
\paragraph{TRIRL in Discrete Settings:} Here, we briefly explain the procedure used to generate \Cref{fig:global_reward}. In the discrete case, our implementation closely follows \cref{alg:rewardCorrection}. The log density ratio is computed by evaluating the expert's and the agent's occupancies analytically from the policy (and transition dynamics). The reward update and reward correction are also computed analytically for the whole grid at once. We use soft value iteration for policy optimization and augment the reward with a reverse KL divergence trust region penalty weighted by a scheduled $\eta$. Soft value iteration returns Boltzmann policies and guarantees convergence to the trust region optimal policy. In the discrete setting, we naturally also don't need discriminators or a discriminator buffer.
\paragraph{Feature-Based Variants:} As highlighted in \Cref{subsec:state_only_trirl}, our method is a general framework and is not limited to a specific discriminator architecture or data modality. We can hence instil additional inductive biases in our method by doing IRL in the space of features. For example, it is reasonable to assume that the reward function for humanoid locomotion is a function of the floating-base velocity, torso height, and action cost. We can compute such features from the provided expert demonstrations, and learn non-linear extensions into a useful feature-space. TRIRL can directly learn reward functions in such feature spaces. We find that such inductive biases can greatly improve the the quality of learnt rewards (\Cref{fig:point_maze}) and highlight that this is generally a limitation of prior works like \cite{fu2017learning,garg2021iq}. We use this strategy for our experiments in \Cref{subsec:global_rewards}, where we learn a linear reward function in the space of non-linear features of the environment's state-only base features. We learn these non-linear features by modelling a feature encoder, predicting the energy function of encoded features, and minimizing the denoising score matching loss \citep[Equation~5]{song2019generative} to perturbed expert samples.
\begin{figure}[t]
    \centering
    \includegraphics[width=0.25\textwidth]{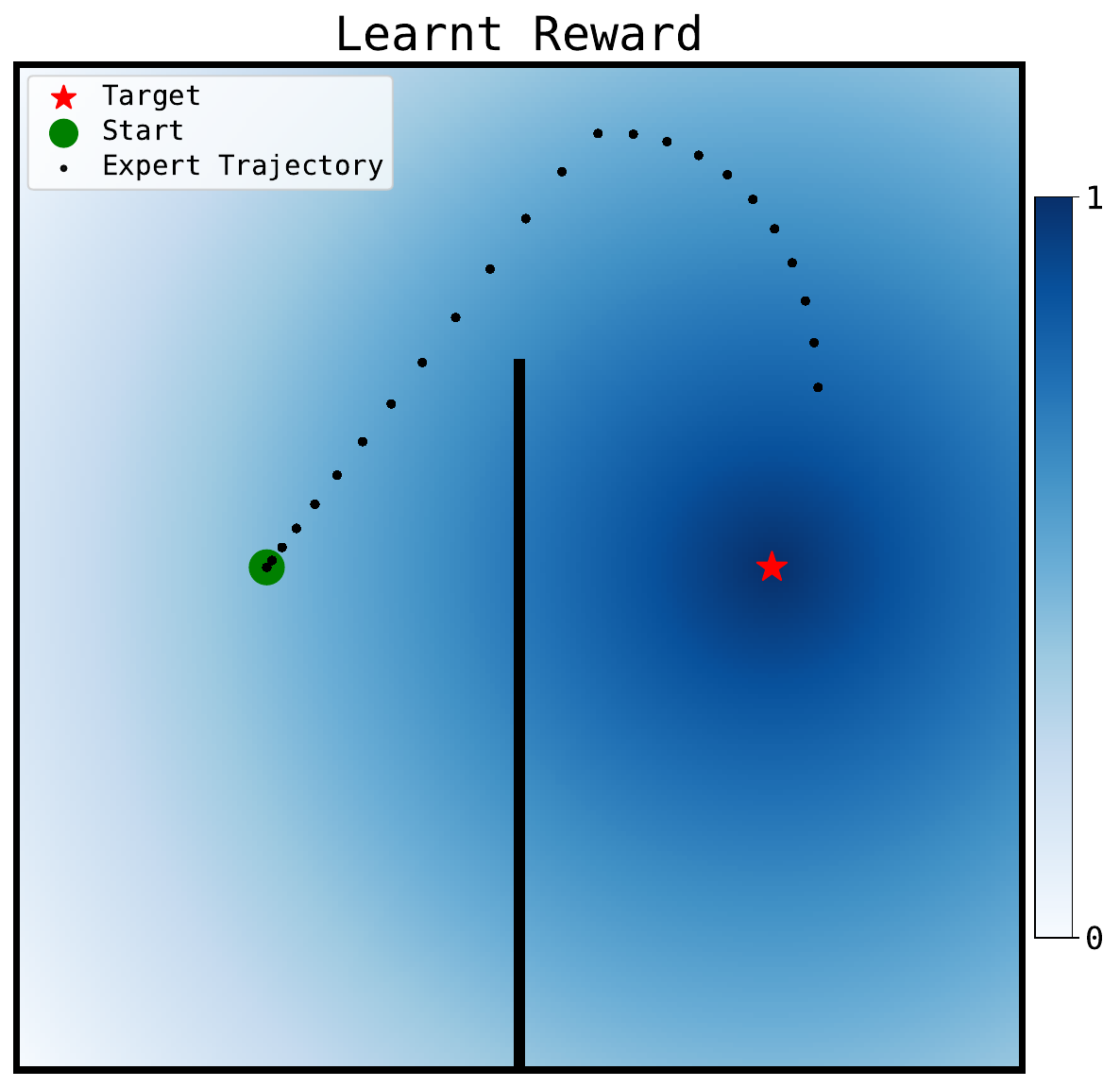}
    \caption{We show global reward functions learnt using a feature-based variant of our method, where we first learn a non-linear transformation of known base-features, and then a global reward function in this feature-space by employing a linear discriminator. We note this reward function is also transferable owing to it's general depiction of the desired ``goal".}
    \label{fig:point_maze}
\end{figure}
\begin{algorithm}[h]
  \caption{Trust Region Inverse RL (continuous state-action spaces)}
  \label{alg:trirl}
  \begin{algorithmic}[1]
    \STATE \textbf{Initialize:} neural networks $\pi^{(0)}$, $D^{(0)}$, and $R_{\mathrm{fit}}^{(0)}$ ;
    \STATE \textbf{Initialize:} circular buffer $\mathcal{B}=\{(D^{(i)},\pi^{(i)}, R_{\mathrm{fit}}^{(i)}, \eta^{(i)})\}_{i=0}^{k}$ ;
    \STATE \textbf{Initialize:} trust-region bounds $\zeta_\mu,\zeta_\Sigma$ , $\eta_{\mathrm{init}}$ , $\epsilon$ ;
    \STATE \textbf{Output:} $r^\star$ and $\pi^\star$ ;

    \REPEAT
      \AlgRComment{ROLLOUT}
      \STATE $\{(\mathbf{s}_t,\mathbf{a}_t,\mathbf{s}'_{t+1})\}_{t=0}^{\text{rollout horizon}} \sim \mathrm{rollout}(\pi^{(i)})$ ;
      \STATE learn $D^{(i)} \approx \log\!\left(\frac{\rho_E}{\rho_{\pi^{(i)}}}\right)$ by minimizing BCE loss ;
      \STATE append $D^{(i)}$ to $\mathcal{B}$ ;

      \AlgRComment{RETROSPECTIVE $\boldsymbol{\eta}$}
      \IF{retr.\ $\eta$}
        \STATE $\{\eta^{(i)}\}_{i=0}^{k} \gets \mathrm{TR\ projection}\!\left(\{\pi^{(i)}\}_{i=0}^{k} \sim \mathcal{B}\right)$ ;
      \ELSE
        \STATE $\{\eta^{(i)}\}_{i=0}^{k} \sim \mathcal{B}$ ;
      \ENDIF

      \AlgRComment{REWARD CORRECTION}
      \STATE $\{D^{(i)}\}_{i=0}^{k} \sim \mathcal{B}$ ;
      \STATE $\mathrm{logits} \gets \mathrm{jax.vmap}\!\left(D^{(i)}.\mathrm{predict}\bigl(\{(\mathbf{s}_t,\mathbf{a}_t,\mathbf{s}'_{t+1})\}\bigr)\right)_{\text{over } i}$ ;
      \STATE $\mathrm{corrected\ reward} \gets R_{\mathrm{fit}}^{(i-k)}$ ;
      \FOR{$\mathrm{logit}^{(i)},\,\eta^{(i)}$ in $\mathrm{zip}(\mathrm{logits},\{\eta^{(i)}\}_{i=0}^{k})$}
        \STATE $\mathrm{step} \gets \epsilon/(1.0+\eta^{(i)})$ ;
        \STATE $\mathrm{corrected\ reward} \gets (1.0-\mathrm{step})\cdot \mathrm{corrected\ reward} + \mathrm{step}\cdot \beta \cdot \mathrm{logit}^{(i)}$ ;
      \ENDFOR
      \STATE $\mathrm{intermediate\ reward} \gets (1.0-\epsilon)\cdot \mathrm{corrected\ reward} + \epsilon\cdot \beta \cdot \mathrm{logit}^{(i=k)}$ ;
      \STATE learn $R_{\mathrm{fit}}^{(i+1)} \approx \mathrm{corrected\ reward}$ ;

      \AlgRComment{POLICY OPTIMIZATION}
      \IF{TR loss}
        \STATE optimize $\pi^{(i+1)}$ to maximize $\mathrm{intermediate\ reward} + \mathrm{schedule}(\eta_{\mathrm{init}})\cdot \mathrm{TR\ loss}$ ;
      \ELSE
        \STATE $\eta^{(i+1)} \sim$ optimize $\pi^{(i+1)}$ to maximize $\mathrm{intermediate\ reward}$ with TRPL ;
      \ENDIF

      \STATE append $\eta^{(i+1)}$, $\pi^{(i+1)}$, and $R_{\mathrm{fit}}^{(i+1)}$ to $\mathcal{B}$ ;
    \UNTIL{converged}
  \end{algorithmic}
\end{algorithm}
\FloatBarrier 
\subsection{Ablation Studies}
\label{appendix:ablations}
\Cref{fig:data_scaling} shows how TRIRL scales with varying amounts of expert demonstrations. In this regard, our method is roughly comparable to the prior works considered in this paper, with the exception of NEAR, which performs much worse in low-data settings (because of the challenges of learning an accurate energy function). \Cref{fig:performance_vs_k,tab:VRAM_vs_k} present an ablation study comparing the performance, runtime, and memory usage of TRIRL with different values of buffer size (k). We find that while a larger buffer size helps improve stability and performance, our method can also perform quite well with small values of buffer size, often outperforming baselines. We also find that the low-k configurations have much lower runtime and memory usage. Finally \Cref{tab:ent_coeff_ablation,tab:epsilon_ablation} show ablation studies for the parameters $\beta$ and $\epsilon$.
\begin{figure}[h]
    \centering
    \includegraphics[width=0.7\textwidth]{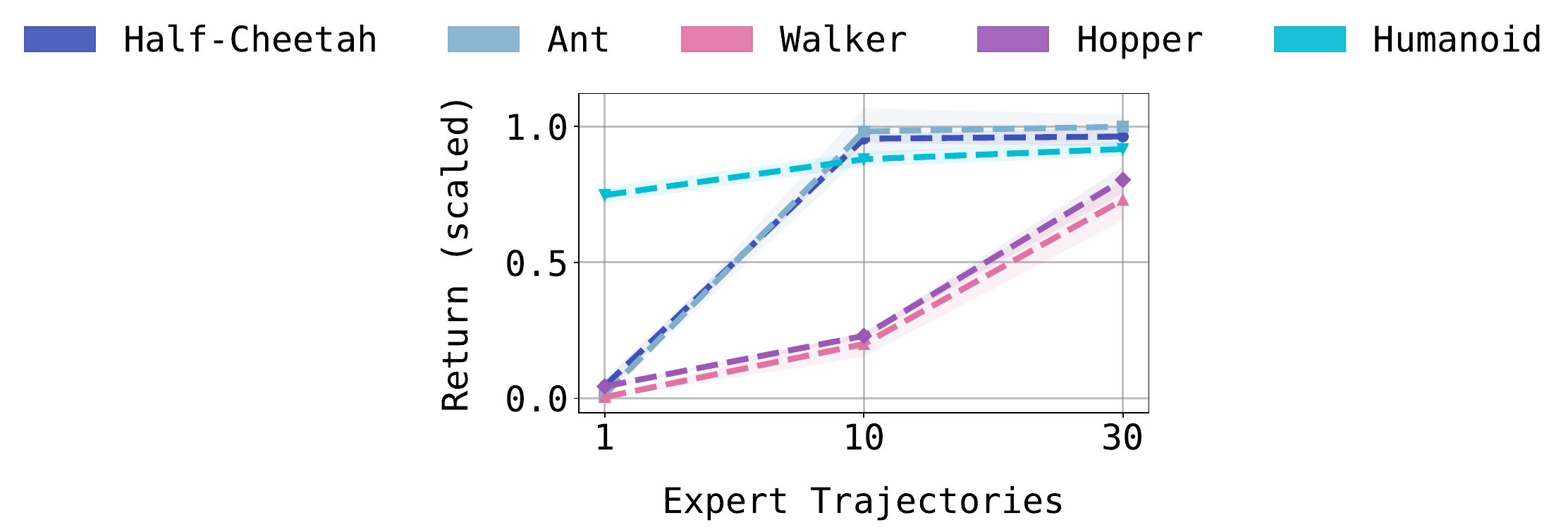}
    \caption{A plot showing how TRIRL scales with different amounts of expert demonstration trajectories. }
    \label{fig:data_scaling}
\end{figure}
\begin{figure}[h]
    \centering
    \includegraphics[width=0.67\textwidth]{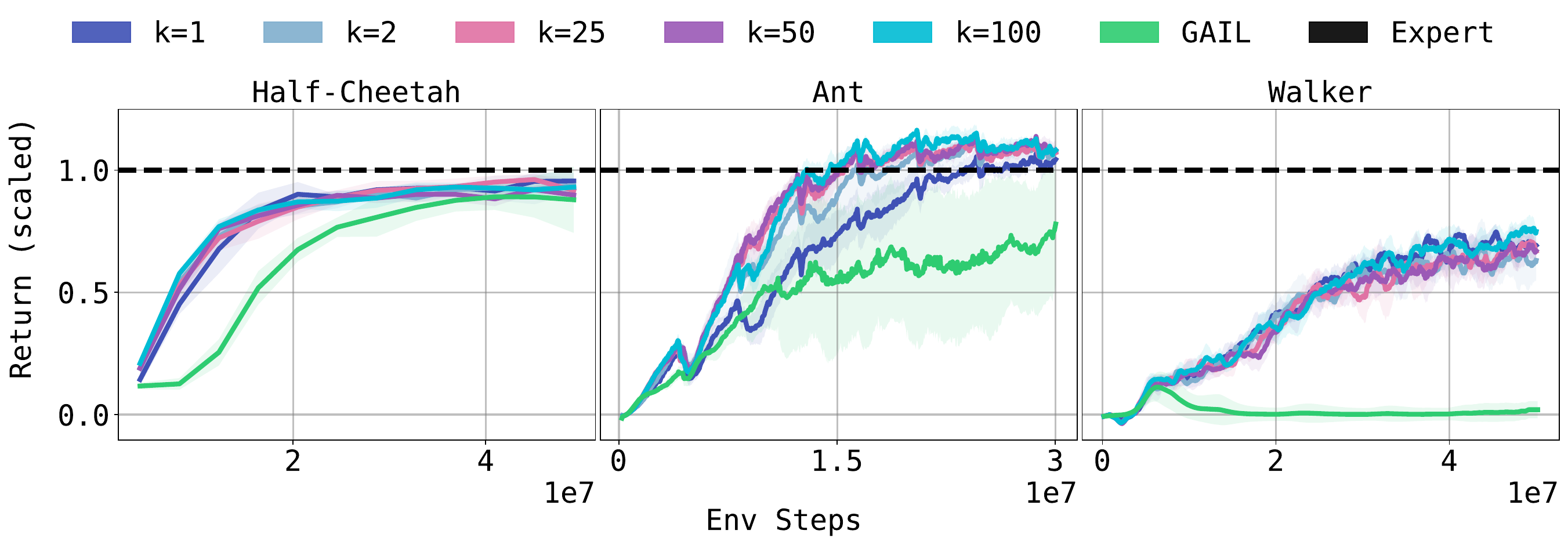}
    \caption{The scaling of performance with buffer size (k). TRIRL outperforms baselines even with very low values of buffer size (k). While a low k still beats baselines, performance and training stability benefit from larger k. This is expected since the contribution of reward fitting (and approximation errors) diminishes exponentially with k.}
    \label{fig:performance_vs_k}
\end{figure}
\begin{table}[h]
\centering
\caption{The scaling of VRAM usage and runtime with buffer size (k) in the Mujoco Ant environment. TRIRL outperforms baselines even with very low values of buffer size (k). The low-k configurations have a negligible runtime and VRAM overhead.}
\label{tab:VRAM_vs_k}
\resizebox{0.47\textwidth}{!}{%
\begin{tabular}{lcc}
\toprule
\textbf{Algorithm} & \textbf{Memory (GB}) & \textbf{Runtime (sec/M steps)} \\
\midrule
TRIRL k=1   & \textbf{0.92} & \textbf{70.72 $\pm$ 0.96} \\
\hspace{32pt}k=2            & 1.46          & 71.11 $\pm$ 0.98 \\
\hspace{32pt}k=25           & 2.53          & 76.46 $\pm$ 1.32 \\
\hspace{32pt}k=50           & 2.53          & 78.07 $\pm$ 1.11 \\
\hspace{32pt}k=100          & 2.53          & 81.98 $\pm$ 3.33 \\
GAIL           & 1.12          & 51.37 $\pm$ 2.13 \\
AMP            & 3.69          & 61.76 $\pm$ 0.93 \\
AIRL           & 1.27          & 66.51 $\pm$ 8.85 \\
NEAR           & 1.01          & 100.41 $\pm$ 3.87 \\
LSIQ           & 2.54          & 143.25 $\pm$ 12.86 \\
\bottomrule
\end{tabular}
}
\end{table}
\begin{table}[h]
\centering
\caption{We conduct an ablation study to investigate the impact of the hyperparameter $\beta$ (entropy regularization). We find that TRIRL is generally insensitive to $\beta$ and the optimal entropy coefficient is generally task-dependent. Our results align with \citep[Fig.19]{orsini2021matters}.}
\label{tab:ent_coeff_ablation}
\resizebox{0.58\textwidth}{!}{%
\begin{tabular}{ccccc}
\toprule
\textbf{ent coeff} &  $\bm{\beta (1 / \text{\textbf{ent coeff}})}$ & \textbf{Ant} & \textbf{Half Cheetah} & \textbf{Walker} \\
\midrule
0.1      & 10      & $\bm{1.01 \pm 0.02}$ & $0.92 \pm 0.09$ & $0.67 \pm 0.08$ \\
0.01     & 100     & $0.95 \pm 0.06$ & $\bm{0.95 \pm 0.09}$ & $\bm{0.73 \pm 0.08}$ \\
0.001    & 1000    & $1.00 \pm 0.08$ & $0.91 \pm 0.10$ & $0.67 \pm 0.08$ \\
0.0001   & 10000   & $1.00 \pm 0.06$ & $0.93 \pm 0.08$ & $0.67 \pm 0.09$ \\
0.00001  & 100000  & $0.94 \pm 0.12$ & $0.92 \pm 0.08$ & $0.73 \pm 0.08$ \\
\bottomrule
\end{tabular}
}
\end{table}
\begin{table}[h]
\centering
\caption{We conduct an ablation study to investigate the impact of the hyperparameter $\epsilon$ which controls the ratio of newly fitted log density ratios and the corrected reward from the previous iteration. TRIRL is also not overly sensitive to $\epsilon$. We find that, setting this to a low value generally ensures good performance, though higher values can speed-up convergence at the cost of performance guarantees.}
\label{tab:epsilon_ablation}
\resizebox{0.4\textwidth}{!}{%
\begin{tabular}{cccc}
\toprule
$\bm{\epsilon}$ & \textbf{Ant} & \textbf{Half Cheetah} & \textbf{Walker}  \\
\midrule
0.01 & $\bm{1.02 \pm 0.03}$ & $0.94 \pm 0.08$ & $0.75 \pm 0.03$ \\
0.2  & $0.99 \pm 0.05$ & $\bm{0.95 \pm 0.09}$ & $\bm{0.76 \pm 0.03}$ \\
0.4  & $1.01 \pm 0.03$ & $0.91 \pm 0.08$ & $0.75 \pm 0.06$ \\
0.6  & $1.00 \pm 0.06$ & $0.91 \pm 0.10$ & $0.72 \pm 0.09$ \\
0.8  & $0.99 \pm 0.02$ & $0.91 \pm 0.08$ & $0.69 \pm 0.04$ \\
0.99 & $0.95 \pm 0.12$ & $0.90 \pm 0.09$ & $0.68 \pm 0.10$ \\
\bottomrule
\end{tabular}
}
\end{table}

\input{appendix_experiments.tex}

%% file: appendix_proofs.tex
\section{Proofs}
\label{appendix:proofs}
We first list some fundamental properties of our optimization problem that are necessary for our analysis. 
\begin{fact}[Policy-Occupancy Relationship]
\label[fact]{fact:pi_rho_equivalency}
There is a unique relationship between a policy $\policy$ and it's state-action occupancy $\policyOcc$, wherein $\policyOcc = \policy \policyDist$. Proof in \citep[Lemmas 3.2, 3.3]{ho2016generative} and \citep[Theorem 2]{syed2008apprenticeship}. 
\end{fact}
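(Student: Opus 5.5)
The plan is to show two things: the factorization $\policyOcc = \policy\policyDist$, and that the map $\pi \mapsto \rho_\pi$ is a bijection between $\Pi$ and the set of valid occupancy measures. The factorization should follow from the definitions alone. The bijection should follow from the Bellman flow equation together with a contraction argument. I will work in the discrete (or suitably measurable) setting of the Preliminaries, with $\gamma<1$.

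\emph{Factorization.} First I define the discounted state occupancy $\policyDist \triangleq \sum_{t=0}^\infty \gamma^t \mu^\pi_t(\mathbf{s})$. This series converges because each $\mu^\pi_t$ is a probability distribution and $\gamma<1$. By the definition of $\policyOcc$ in the Preliminaries we then have $\policyOcc = \policy\policyDist$ directly. Summing over $\mathbf{a}$ gives $\sum_{\mathbf{a}}\policyOcc = \policyDist$. Hence the policy can be read off from its occupancy wherever $\policyDist>0$:
\[
\policy = \frac{\policyOcc}{\sum_{\mathbf{a}'}\rho_\pi(\mathbf{s},\mathbf{a}')}.
\]
So $\pi \mapsto \rho_\pi$ is injective, up to the values of $\pi$ on states that are never visited. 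On such states $\pi$ is irrelevant to every quantity in \Cref{eq:IRLDM}, so I will identify policies that agree on the support of $\rho_\pi(\mathbf{s})$.

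\emph{Flow equation and the reverse direction.} Next I unroll the recursion for $\mu^\pi_t$. This gives the Bellman flow constraint
\[
\sum_{\mathbf{a}}\rho(\mathbf{s}',\mathbf{a}) = \mu_0(\mathbf{s}') + \gamma\sum_{\mathbf{s},\mathbf{a}}\mathcal{P}(\mathbf{s}'\vert\mathbf{s},\mathbf{a})\,\rho(\mathbf{s},\mathbf{a}),
\]
and every $\rho_\pi$ is a nonnegative solution of it. For the converse, take any nonnegative $\rho$ satisfying the flow constraint, define $\pi_\rho$ by the normalization above (choosing $\pi_\rho$ arbitrarily, e.g.\ uniform, where the state marginal vanishes), and show that $\rho_{\pi_\rho}=\rho$. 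Writing $d(\mathbf{s})=\sum_{\mathbf{a}}\rho(\mathbf{s},\mathbf{a})$, the flow constraint becomes the linear fixed-point equation $d = \mu_0 + \gamma P_{\pi_\rho}^\top d$. The marginal $\rho_{\pi_\rho}(\mathbf{s})$ satisfies the same equation.

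\emph{Main obstacle: uniqueness of the fixed point.} The key step is showing that this equation has a unique solution. I would argue that $\gamma P_{\pi}^\top$ is a $\gamma$-contraction in $\ell_1$, because $P_\pi^\top$ preserves total mass of nonnegative vectors. Equivalently, $I-\gamma P_\pi^\top$ is invertible with Neumann series $\sum_t \gamma^t (P_\pi^\top)^t$, and applying this series to $\mu_0$ reproduces exactly $\sum_t\gamma^t\mu_t^\pi$. It follows that $d=\rho_{\pi_\rho}(\mathbf{s})$, and then $\rho = \pi_\rho d = \rho_{\pi_\rho}$. Together with injectivity this gives the claimed one-to-one correspondence, as in \citep[Lemmas 3.2, 3.3]{ho2016generative} and \citep[Theorem 2]{syed2008apprenticeship}. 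In continuous spaces the same argument works with Markov kernels and total-variation norm, with Radon--Nikodym derivatives in place of ratios.
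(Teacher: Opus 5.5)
Your proof is correct and is essentially the standard argument that the paper gives only by citation (Syed et al., Theorem 2, as used by Ho and Ermon): normalize $\rho$ to obtain $\pi_\rho$, then show $\rho_{\pi_\rho}=\rho$ because the flow equation $d=\mu_0+\gamma P_{\pi_\rho}^\top d$ has the unique solution $\sum_t\gamma^t(P_{\pi_\rho}^\top)^t\mu_0$ ($I-\gamma P_{\pi_\rho}^\top$ is invertible). Two small points: your explicit identification of policies that differ only on unvisited states is exactly what makes ``unique'' literally true, and the $\ell_1$-contraction step (non-expansiveness on signed vectors follows from the triangle inequality, not only from mass preservation on nonnegative vectors) gives uniqueness only among summable $d$; so beyond finite state spaces you must define valid occupancy measures as nonnegative flow solutions with finite total mass, since e.g.\ a deterministic shift on $\mathbb{Z}$ admits non-summable nonnegative solutions that are not occupancy measures.
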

\begin{fact}[Entropy Concavity]
\label[fact]{fact:concave_entropy}
The expected entropy $\mathbb{E}_{\policyDist} \left[ H(\policy) \right]$ is concave in $\policyOcc$. Proof in \citep[Lemma 3.1, Appendix A.1]{ho2016generative,neu2017unified}.
\end{fact}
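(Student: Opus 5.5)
The plan is to rewrite the expected causal entropy as an explicit function of the occupancy measure, and then recognise that function as a sum of negated perspectives of convex functions. By \Cref{fact:pi_rho_equivalency}, any valid occupancy $\policyOcc$ determines its policy uniquely via $\policy = \policyOcc / \policyDist$, where $\policyDist = \sum_{\mathbf{a}'} \rho_\pi(\mathbf{s},\mathbf{a}')$ is the (unnormalised, discounted) state occupancy. Substituting this into $\mathbb{E}_{\policyDist}[H(\policy)]$ gives
\begin{equation*}
\bar H(\rho) \;=\; -\sum_{\mathbf{s},\mathbf{a}} \rho(\mathbf{s},\mathbf{a}) \log \frac{\rho(\mathbf{s},\mathbf{a})}{\sum_{\mathbf{a}'} \rho(\mathbf{s},\mathbf{a}')} .
\end{equation*}
This expression is a function of $\rho$ alone. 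The domain of $\bar H$ is convex, because the set of valid occupancies is cut out by the linear Bellman flow constraints together with nonnegativity. It therefore suffices to show that $\bar H$ is concave on the nonnegative orthant.

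For concavity, I would take two occupancies $\rho_1,\rho_2$, a weight $\lambda\in[0,1]$, and set $\rho=\lambda\rho_1+(1-\lambda)\rho_2$. Fix a pair $(\mathbf{s},\mathbf{a})$ and apply the log-sum inequality to the numerators $\lambda\rho_1(\mathbf{s},\mathbf{a})$ and $(1-\lambda)\rho_2(\mathbf{s},\mathbf{a})$, with denominators $\lambda\rho_1(\mathbf{s})$ and $(1-\lambda)\rho_2(\mathbf{s})$. The factors of $\lambda$ and $1-\lambda$ cancel inside the logarithms, which yields
\begin{equation*}
\rho(\mathbf{s},\mathbf{a})\log\frac{\rho(\mathbf{s},\mathbf{a})}{\rho(\mathbf{s})} \le \lambda \rho_1(\mathbf{s},\mathbf{a})\log\frac{\rho_1(\mathbf{s},\mathbf{a})}{\rho_1(\mathbf{s})} + (1-\lambda)\rho_2(\mathbf{s},\mathbf{a})\log\frac{\rho_2(\mathbf{s},\mathbf{a})}{\rho_2(\mathbf{s})},
\end{equation*}
using that $\rho(\mathbf{s})$ is itself the corresponding convex combination of $\rho_1(\mathbf{s})$ and $\rho_2(\mathbf{s})$. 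Summing over $(\mathbf{s},\mathbf{a})$ and negating gives $\bar H(\rho)\ge\lambda\bar H(\rho_1)+(1-\lambda)\bar H(\rho_2)$.

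An equivalent route is to note that $\bar H$ is the negative of $\sum_{\mathbf{s}}\rho(\mathbf{s})\,\mathrm{KL}\bigl(\pi(\cdot|\mathbf{s})\,\|\,\mathrm{unif}\bigr)$ up to a term linear in $\rho$. Each summand is the perspective of a convex function, so it is jointly convex in $\rho(\mathbf{s},\cdot)$.

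The main obstacle is the boundary, namely states with $\rho(\mathbf{s})=0$. There the policy is not pinned down by $\rho$, and the ratio inside the logarithm is $0/0$. I would adopt the conventions $0\log(0/0)=0$ and $0\log 0=0$, under which the log-sum inequality still holds. I would also check that these states contribute zero entropy in the original definition, since they carry zero weight under $\policyDist$. Consequently, whatever arbitrary policy is assigned there does not affect $\bar H$, and the identification in \Cref{fact:pi_rho_equivalency} is enough for the argument. In the continuous case the sums become integrals and the same pointwise inequality applies. The remaining point is to confirm that the relevant integrability holds, which follows from $\rho$ having finite total mass $1/(1-\gamma)$.
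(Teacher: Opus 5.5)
Your argument is correct and is essentially the proof the paper defers to: Ho and Ermon's Lemma~3.1 (Appendix~A.1) also rewrites the expected causal entropy as $\bar H(\rho)=-\sum_{\mathbf{s},\mathbf{a}}\rho(\mathbf{s},\mathbf{a})\log\bigl(\rho(\mathbf{s},\mathbf{a})/\sum_{\mathbf{a}'}\rho(\mathbf{s},\mathbf{a}')\bigr)$, applies the log-sum inequality at each $(\mathbf{s},\mathbf{a})$ with the $\lambda$ and $1-\lambda$ factors cancelling, and sums (they additionally read off strict concavity from the equality case). The one loose point is your continuous-space aside: finite total mass $1/(1-\gamma)$ does not by itself make $\rho\log(\rho/\rho(\mathbf{s}))$ integrable, because conditional differential entropies can be $\pm\infty$, so that case needs an explicit finiteness assumption or extended-value conventions---though this lies outside the finite setting of the cited result.
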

\begin{fact}[Strong Duality]
\label[fact]{fact:strong_duality}
The optimization problem in \Cref{eq:TR_MAXENT_RL} has strong duality. The proof of this naturally follows from \Cref{fact:concave_entropy} and the strong duality of entropy regularized RL \citep{peters2010relative,achiam2017constrained,neu2017unified,geist2019theory}. The expected entropy term is concave, the reward term is linear, the KL constraint is convex, and the Bellman flow constraints are linear (all in $\policyOcc$). Hence Slater’s conditions hold and strong duality applies. Finally from \Cref{fact:pi_rho_equivalency}, the policy is equivalently obtained by optimizing over $\policyOcc$ followed by marginalization.
\end{fact}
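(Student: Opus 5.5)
The plan is to move the problem in \Cref{eq:TR_MAXENT_RL} from policy space to occupancy space. There it becomes a finite-dimensional convex program with linear equality constraints and a single convex inequality, so Slater's condition gives strong duality. I work with finite $\mathcal{S}$ and $\mathcal{A}$. Using \Cref{fact:pi_rho_equivalency}, I replace the decision variable $\pi$ by $\rho(\mathbf{s},\mathbf{a}) \ge 0$, subject to the linear Bellman flow constraints
\[
\sum_{\mathbf{a}} \rho(\mathbf{s}',\mathbf{a}) = \mu_0(\mathbf{s}') + \gamma \sum_{\mathbf{s},\mathbf{a}} \mathcal{P}(\mathbf{s}'|\mathbf{s},\mathbf{a}) \rho(\mathbf{s},\mathbf{a}).
\]
The policy is then recovered as $\pi = \rho(\mathbf{s},\mathbf{a})/\rho(\mathbf{s})$. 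This correspondence is a bijection on states with $\rho(\mathbf{s})>0$. On unvisited states the policy is irrelevant to both the objective and the constraint, so it may be fixed arbitrarily, for instance to $\pi^{(i)}_{\text{tr}}$.

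Next I verify convexity term by term. The expected reward $\sum \rho\, r^{(i)}$ is linear in $\rho$. The causal entropy $-\sum_{\mathbf{s},\mathbf{a}} \rho(\mathbf{s},\mathbf{a})\log\frac{\rho(\mathbf{s},\mathbf{a})}{\rho(\mathbf{s})}$ is concave by \Cref{fact:concave_entropy}. For the trust-region term I write
\[
\mathbb{E}_{\rho(\mathbf{s})}\big[\mathrm{KL}(\pi\|\pi^{(i)}_{\text{tr}})\big] = \sum_{\mathbf{s},\mathbf{a}} \rho(\mathbf{s},\mathbf{a})\log\frac{\rho(\mathbf{s},\mathbf{a})}{\rho(\mathbf{s})} - \sum_{\mathbf{s},\mathbf{a}}\rho(\mathbf{s},\mathbf{a})\log \pi^{(i)}_{\text{tr}}(\mathbf{a}|\mathbf{s}).
\]
This is a negated causal entropy plus a linear term, so it is convex by the same fact. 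An alternative route is the joint convexity of the perspective $\rho\log(\rho/\rho(\mathbf{s}))$, i.e.\ the log-sum inequality. The feasible set is therefore convex and the objective concave.

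For Slater's condition I take the point $\rho_{\pi^{(i)}_{\text{tr}}}$. It satisfies the flow constraints exactly and gives KL $=0<\zeta$ whenever $\zeta>0$, so the inequality constraint is strictly satisfied. Because the remaining constraints are affine, the refined Slater condition needs strict feasibility only for the nonlinear inequality together with membership in the relative interior of the domain. If $\pi^{(i)}_{\text{tr}}$ has full support, as with the uniform initialization or Gaussian/Boltzmann policies, then $\rho_{\pi^{(i)}_{\text{tr}}}$ lies in the relative interior. Standard convex duality then gives a zero duality gap and attainment of the dual multiplier $\eta\ge 0$. Mapping the primal optimum back through \Cref{fact:pi_rho_equivalency} yields the policy.

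I expect the main obstacle to be the technical boundary issues rather than the convexity argument. One is that $\log\pi^{(i)}_{\text{tr}}$ can be $-\infty$ when the reference policy lacks full support; I would handle this by restricting the action set per state to the support of $\pi^{(i)}_{\text{tr}}$, since any $\pi$ outside it has infinite KL. The other is the continuous-space case, where I would only remark that the same argument carries over to infinite-dimensional Lagrangian duality under the usual compactness and regularity assumptions, as in \citet{peters2010relative,neu2017unified}.
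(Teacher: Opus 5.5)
Your proposal is correct and follows the same route as the paper's sketch: you pass to occupancy space via \Cref{fact:pi_rho_equivalency}, note a concave objective, a convex KL constraint and linear flow constraints, exhibit $\rho_{\pi^{(i)}_{\text{tr}}}$ (with KL $=0<\zeta$) as a Slater point, and map back, while making the $\zeta>0$ and support assumptions explicit where the paper leaves them implicit. One small tightening: $\rho_{\pi^{(i)}_{\text{tr}}}$ lies in the relative interior of the nonnegative orthant only if every state is also visited, so you should either prune states that no policy can reach, or more simply dualize only the KL constraint over the flow polytope. In that second form the plain Slater condition (a feasible point with strict KL inequality, after your support restriction) already gives a zero gap and an attained $\eta\ge 0$, and this is exactly the partial Lagrangian used in \Cref{lemma1}.
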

\begin{proof}[\textbf{Proof of reward update soundness}] Here, we prove that the reward update $r^{(i+1)} = \left( \updateOp{i} \right) r^{(i)} =  r^{(i)} - \epsilon \delta^{(i)}$ (\Cref{eq:irldm_rew_update}) corresponds to an ascent direction with respect to the dual of \Cref{eq:IRLDM}. This proof is a reworking of the results from \citet{arenz2016optimal}. To show this, we first redefine the reward update in terms of it's parameters and arrive at an equivalent form of the original update direction. Then, we show that this aligns with the gradient of the dual of \Cref{eq:IRLDM}.

From \citep{ziebart2010modeling, arenz2016optimal}, we note that the learnt reward is linear in it's features. Let $\expOcc = Z^{-1}_{E} \exp{\left( \phi_{E} \; \psi(\mathbf{s}, \mathbf{a}) \right)}$ and $\policyOccIter{i} = Z^{-1}_{\rho^{(i)}} \exp{\left( \phi_{\rho^{(i)}} \; \psi(\mathbf{s}, \mathbf{a}) \right)}$ be Boltzmann distributions with energy functions that are---without loss of generality---linear in features $\psi(\mathbf{s}, \mathbf{a})$ that are not assumed to be known. In our analysis, we denote reward parameters $\theta$, occupancy parameters $\phi$, and a general feature function $\psi(\mathbf{s}, \mathbf{a})$. Estimates of the optimal reward and optimal policy occupancy are termed $\hat{r}$ and $\rho_{\hat{\pi}}$. Owing to the reward's linearity, we replace the above update with an equivalent version in parameter space:
\begin{align*}
\theta^{(i+1)} &= \theta^{(i)} - \epsilon \delta^{(i)} \text{\quad where \quad} \delta^{(i)} = \theta^{(i)} - \beta \log \frac{\expOcc}{\policyOccIter{i}}
\end{align*}
and $\hat{r} = \beta \log \frac{\expOcc}{\policyOccIter{i}}$ is an estimate of the optimal reward function $r^{\star}$. \citet{arenz2016optimal} also show that this can be used to obtain an estimate of the optimal policy's occupancy:
\begin{align}
\label{lemma4_0}
    \rho_{\hat{\pi}}(\mathbf{s}, \mathbf{a}) &\propto \exp{\left( \log \expOcc - \frac{1}{\beta} \hat{r} \right)} \nonumber \\ 
    &\propto \exp{\left( \left(\phi_{E} - \frac{\theta^{(i)}}{\beta}\right)^\intercal \psi(\mathbf{s}, \mathbf{a}) \right)} \nonumber \\
    &\propto \exp{\left(\hat{\phi}^\intercal \psi(\mathbf{s}, \mathbf{a}) \right)}
\end{align}
Then, the estimate of the optimal reward is given by
\begin{align*}
    \beta \log \frac{\expOcc}{\policyOccIter{i}} &= \beta \left( \log Z^{-1}_{E} \exp{\left( \phi_{E} \; \psi(\mathbf{s}, \mathbf{a}) \right)} - \log Z^{-1}_{\rho^i} \exp{\left( \phi_{\rho^i} \; \psi(\mathbf{s}, \mathbf{a}) \right)} \right) \\
    &= \beta \left( \phi_{E} -  \phi_{\rho^i} \right) + \textrm{const.}
\end{align*}
The update direction then translates to:
\begin{align}
    \label{lemma4}
    \delta^{(i)} &= \theta^{(i)} - \beta \left( \phi_{E} -  \phi_{\rho^i} \right) \nonumber \\
    &= \beta \left( \frac{\theta^{(i)}}{\beta} -\phi_{E} +  \phi_{\rho^i} \right) \nonumber \\
    &= \beta \left( \phi_{\rho^i} - \hat{\phi} \right) \textrm{\quad (from \Cref{lemma4_0})}.
\end{align}
Given \Cref{lemma4}, we now show that $\delta^{\prime} = \nicefrac{1}{\beta} \; \delta^{(i)} = \phi_{\rho^i} - \hat{\phi}$ aligns with the gradient of the Lagrangian dual $\mathcal{G}$. Where,
\begin{align*}
    \mathcal{G} &= \mathbb{E}_{\mu_0} \left[ V^{\textrm{soft}}_{\pi}(\mathbf{s}_0) \right] + \beta \log \sum_{(\mathbf{s}, \mathbf{a})} \exp{\left( \log \expOcc - \frac{1}{\beta} r^{\star}(\mathbf{s}, \mathbf{a}) \right)} \\
    \frac{\partial \mathcal{G}}{\partial \theta} &= \mathbb{E}_{\policyDist} \left[ \psi(\mathbf{s}, \mathbf{a}) \right] - \mathbb{E}_{\rho_{\hat{\pi}}} \left[ \psi(\mathbf{s}, \mathbf{a}) \right]
\end{align*}
We refer the reader to \citet{arenz2016optimal} for the derivations of the Lagrangian dual and it's gradient. Let $\langle . \;,\; . \rangle$ define the inner product of two vectors.
\begin{align*}
    \left \langle \delta^{\prime} \;,\;  \frac{\partial \mathcal{G}}{\partial \theta} \right \rangle &= \left \langle \phi_{\rho^i} - \hat{\phi} \;,\; \mathbb{E}_{\policyDist} \left[ \psi(\mathbf{s}, \mathbf{a}) \right] - \mathbb{E}_{\rho_{\hat{\pi}}} \left[ \psi(\mathbf{s}, \mathbf{a}) \right] \right \rangle \\
    &=  \mathbb{E}_{\policyDist} \left[ \left( \phi_{\rho^i} - \hat{\phi} \right)^\intercal \psi(\mathbf{s}, \mathbf{a}) \right] - \mathbb{E}_{\rho_{\hat{\pi}}} \left[ \left( \phi_{\rho^i} - \hat{\phi} \right)^\intercal \psi(\mathbf{s}, \mathbf{a}) \right] \\
    &= \mathbb{E}_{\policyDist} \left[ \log \frac{\policyOccIter{i}}{\rho_{\hat{\pi}}(\mathbf{s}, \mathbf{a})} + \log \frac{Z_{\rho^i}}{Z_{\hat{\rho}}} \right] + \mathbb{E}_{\rho_{\hat{\pi}}} \left[ \log \frac{\rho_{\hat{\pi}}(\mathbf{s}, \mathbf{a})}{\policyOccIter{i}} + \log \frac{Z_{\hat{\rho}}}{Z_{\rho^i}} \right] \\
    &= D_{\textrm{KL}} \left[ \rho_{\pi^{(i)}} \;\lvert\rvert\; \hat{\rho} \right] + D_{\textrm{KL}} \left[ \hat{\rho} \;\lvert\rvert\; \rho_{\pi^{(i)}} \right] \\
    &\geq 0 \textrm{\quad (sum of KLs)}
\end{align*}
The update direction $\delta^{(i)}$ must align with the gradient since their inner product is positive. In other words, repeated applications of \Cref{eq:irldm_rew_update} lead to monotonic improvement in the objective.
\end{proof}
\begin{proof}[\textbf{Proof of \Cref{lemma1}}]
From \Cref{fact:strong_duality}, we note the existence of an optimal Lagranginan multiplier. We further clarify that we do not require exact, per-state satisfaction of the constraint in \Cref{eq:TR_MAXENT_RL}. Instead, it is sufficient to optimize an expected KL constraint using an $\eta$-weighted penalty term (as shown below). We start with the Lagrangian expression of optimization problem~\ref{eq:TR_MAXENT_RL}, given by
\begin{align*}
    \mathcal{L}(\pi, \eta) = \mathbb{E}_{\policyDist} \left[ H(\policy) \right] + \mathbb{E}_{\policyDist} \left[\mathbb{E}_\pi\left[\reward \right]\right] + \eta \zeta - \eta \mathbb{E}_{\policyDist} \left[\textrm{KL}\Big(\policy||\policyIter{i}\Big) \right]
\end{align*}
For a given Lagrangian multiplier $\eta$, the optimal policy can be computed as
\begin{align*}
    \pi^\eta(\mathbf{a} \vert\mathbf{s}) =& \underset{\policy}{\argmax} \; \mathcal{L}(\pi, \eta) \\
    =& \underset{\policy}{\argmax} \; \mathbb{E}_{\policyDist} \left[ H(\policy) \right] + \mathbb{E}_{\policyDist} \left[\mathbb{E}_\pi\left[\reward \right]\right] + \eta \zeta - \eta \mathbb{E}_{\policyDist} \left[\textrm{KL}\Big(\policy||\policyIter{i}\Big) \right] \\
    =& \underset{\policy}{\argmax} \; \mathbb{E}_{\policyDist} \left[ H(\policy) + \mathbb{E}_\pi\left[\reward \right] - \eta \textrm{KL}\Big(\policy||\policyIter{i}\Big) \right] + \eta \zeta\\
    =& \underset{\policy}{\argmax} \; \mathbb{E}_{\policyDist} \left[ H(\policy) + \mathbb{E}_\pi\left[\reward \right] - \eta \mathbb{E}_\pi \left[ \log \frac{\policy}{\policyIter{i}} \right] \right] + \eta \zeta\\
    =& \underset{\policy}{\argmax} \; \mathbb{E}_{\policyDist} \left[ H(\policy) + \mathbb{E}_\pi\left[\reward \right] - \eta \mathbb{E}_\pi \left[ \log \policy - \log \policyIter{i} \right] \right] + \eta \zeta\\
    =& \underset{\policy}{\argmax} \; \mathbb{E}_{\policyDist} \left[ H(\policy) + \mathbb{E}_\pi\left[\reward \right] - \eta \mathbb{E}_\pi \left[ \log \policy \right] + \eta \mathbb{E}_\pi \left[ \log \policyIter{i} \right] \right] + \eta \zeta\\
    =& \underset{\policy}{\argmax} \; \mathbb{E}_{\policyDist} \left[ (1 + \eta) H(\policy) + \mathbb{E}_\pi\left[\reward + \eta \log \policyIter{i} \right] \right] + \eta \zeta\\
    =& \underset{\policy}{\argmax} \; (1 + \eta) \mathbb{E}_{\policyDist} \left[H(\policy) + \mathbb{E}_\pi\left[ \frac{\reward}{(1 + \eta) } + \frac{\eta}{(1 + \eta)} \log \policyIter{i} \right] \right] + \eta \zeta\\
    =& \pi^{\text{MCE}}_{r_\eta}
\end{align*}
\end{proof}
\newpage
\begin{proof}[\textbf{Proof of \Cref{theorem:existence}}]
\begin{align*}
    \pi_{\text{tr}}^{(i+1)}&(\mathbf{a} \vert \mathbf{s}) \\
    &= \underset{\policy}{\argmax} \; \mathbb{E}_{\policyDist} \left[ H(\policy) \right] + \mathbb{E}_{\policyDist} \left[ \mathbb{E}_\pi\left[\intrewardIter{i+1} \right] \right]  \\& - \eta \mathbb{E}_{\policyDist} \left[ \textrm{KL}\Big(\pi||\pi^{(i)}_{\text{tr}}\Big) \right] + \eta \zeta  \\\\
    &\text{\textit{Apply Lemma \ref{lemma1}}} \\
    &= \underset{\policy}{\argmax} \; \mathbb{E}_{\policyDist} \left[ H(\policy) + \mathbb{E}_\pi\left[ \frac{\intrewardIter{i+1}}{(1 + \eta)} + \frac{\eta}{(1 + \eta)} \log \policyIter{i} \right] \right]\\\\
    &\text{\textit{Substitute }} \tilde{r}^{(i+1)} = \left( \updateOp{i} \right) r^{(i)} \\
    &= \underset{\policy}{\argmax} \; \mathbb{E}_{\policyDist} \Biggl[ H(\policy) + \mathbb{E}_\pi\Biggl[ \frac{1}{(1 + \eta)} \left( (1-\epsilon) \rewardIter{i} + \epsilon \beta \log \frac{\expOcc}{\policyOccIter{i}} \right) \\&+ \frac{\eta}{(1 + \eta)} \log \policyIter{i} \Biggr] \Biggr]\\\\
    & \policyIter{i} \text{\textit{ is Boltzmann}} \\
    &=  \underset{\policy}{\argmax} \; \mathbb{E}_{\policyDist} \Biggl[ H(\policy) + \mathbb{E}_\pi \Biggl[ \frac{1}{(1 + \eta)} \left( (1-\epsilon) \rewardIter{i} + \epsilon \beta \log \frac{\expOcc}{\policyOccIter{i}} \right) \\&+ \frac{\eta}{(1 + \eta)} \left( Q^{(i)}(\mathbf{s},\mathbf{a}) - V^{(i)}(\mathbf{s}) \right) \Biggr] \Biggr]\\
    &= \underset{\policy}{\argmax} \; \mathbb{E}_{\policyDist} \Biggl[ H(\policy) + \mathbb{E}_\pi \Biggl[ \frac{1}{(1 + \eta)} \left( (1-\epsilon) \rewardIter{i} + \epsilon \beta \log \frac{\expOcc}{\policyOccIter{i}} \right) \\&+ \frac{\eta}{(1 + \eta)} \left( \rewardIter{i} + \gamma \mathbb{E}_{\mathbf{s}'} \left[ V^{(i)}(\mathbf{s}') \right] - V^{(i)}(\mathbf{s}) \right) \Biggr] \Biggr]\\
    &= \underset{\policy}{\argmax} \; \mathbb{E}_{\policyDist} \Biggl[ H(\policy) + \mathbb{E}_\pi \Biggl[ \frac{(1-\epsilon)}{(1 + \eta)} \rewardIter{i} + \frac{\epsilon}{(1 + \eta)} \beta \log \frac{\expOcc}{\policyOccIter{i}} \\&+ \frac{\eta}{(1 + \eta)} \rewardIter{i} + \frac{\eta}{(1 + \eta)} \left( \gamma \mathbb{E}_{\mathbf{s}'} \left[ V^{(i)}(\mathbf{s}') \right] - V^{(i)}(\mathbf{s}) \right) \Biggr] \Biggr]\\\\
    &\text{\textit{Policy invariance under potential shaping \citep{ng1999policy}}} \\
    &= \underset{\policy}{\argmax} \; \mathbb{E}_{\policyDist} \Biggl[ H(\policy) + \mathbb{E}_\pi \Biggl[ \frac{(1-\epsilon)}{(1 + \eta)} \rewardIter{i} + \frac{\epsilon}{(1 + \eta)} \beta \log \frac{\expOcc}{\policyOccIter{i}} + \frac{\eta}{(1 + \eta)} \rewardIter{i} \Biggr] \Biggr]\\
    &= \underset{\policy}{\argmax} \; \mathbb{E}_{\policyDist} \Biggl[ H(\policy) + \mathbb{E}_\pi \Biggl[ \left( 1 - \frac{\epsilon}{(1 + \eta)} \right) \rewardIter{i} + \frac{\epsilon}{(1 + \eta)} \beta \log \frac{\expOcc}{\policyOccIter{i}} \Biggr] \Biggr]\\
    &= \underset{\policy}{\argmax} \; \mathbb{E}_{\policyDist} \left[ H(\policy) + \mathbb{E}_\pi \left[ \left( \mathcal{U}_{ \left(\frac{\epsilon}{(1 + \eta)} \right)}^{\rho^{(i)}} \right) r^{(i)} \right] \right]\\
    &= \pi^{\text{MCE}}_{r^{(i+1)}}\\
\end{align*}
\end{proof}

%% file: appendix_experiments.tex
\section{Experiment Details}
\label{appendix:experiments}
\subsection{Baselines}
\paragraph{Generative Adversarial Imitation Learning (GAIL):} \citet{ho2016generative} propose GAIL, one of the first methods to reformulate the MCE-IRL dual ascent algorithm \citep{ziebart2010modeling} into a scalable method for imitation learning in complex continuous-domain settings. GAIL achieves the same optimal primal solution as MCE-IRL, however, it relies on per-step local policy optimization and a local reward function learnt using a neural network discriminator. GAIL minimizes the Jensen-Shannon divergence between the expert and agent distributions. Practically, the reward function is defined as $\log \sigma \left( D(\mathbf {s}, \mathbf{a}) \right)$ where $ D(\mathbf{s}, \mathbf{a})$ is a binary classifier that distinguishes expert and agent samples. While the paper originally proposes TRPO \citep{schulman2015trust} for policy optimization, most modern implementations use PPO \citep{schulman2017proximal} because of its better performance and scalability.
\paragraph{Adversarial Inverse Reinforcement Learning (AIRL):} \citet{fu2017learning} present AIRL, another adversarial IL algorithm that learns unshaped rewards and focuses on reward recovery. Similarly to our method, AIRL minimizes the reverse KL divergence between the expert and agent samples. The main contribution of the paper is to introduce an unshaped reward function to disentangle the reward from the environment's dynamics. In doing so, \citet{fu2017learning} claim to learn a reward function that conveys the expert's motivations, instead of simply rewarding closeness to the expert's transitions. In AIRL, the discriminator takes the form: 
\begin{align*}
D(\mathbf{s}, \mathbf{a}) &= \frac{\exp{(f_{\theta, \phi}(\mathbf{s}, \mathbf{a}))}}{\exp{(f_{\theta, \phi}(\mathbf{s}, \mathbf{a}))} + \pi(\mathbf{a} \vert \mathbf{s})} \quad \text{where} \\
f_{\theta, \phi}(\mathbf{s}, \mathbf{a}) &= g_{\theta}(\mathbf{s}, \mathbf{a}) + \gamma h_{\phi}(\mathbf{s}') - h_{\phi}(\mathbf{s})
\end{align*}
Here, the unshaped rewards are learnt as $g_{\theta}(\mathbf{s}, \mathbf{a})$ or $g_{\theta}(\mathbf{s})$ (in case of state-only rewards). Similarly to GAIL, the authors propose TRPO for policy optimization, however, modern implementations use PPO.
\paragraph{Adversarial Motion Priors (AMP):} Recently, \citet{peng2021amp} propose AMP, a method that leverages the improvements from least-squares GANs \citep{mao2017least}, for adversarial imitation learning. Mainly, AMP differs from GAIL in its minimization of the $\chi^2$ divergence between the expert and agent distributions. \citet{peng2021amp} also engineer the reward function slightly differently to better leverage the squared-error minimization. The AMP reward function is $r(\mathbf{s}, \mathbf{a}) = \max [0, 1 - 0.25 (D(\mathbf{s}, \mathbf{a}) - 1)^2]$ where $D(\mathbf{s}, \mathbf{a})$ is a binary classifier trained to assign a label of 1 to the expert samples and -1 to the agent samples. 
\paragraph{Least Squares Inverse Q-Learning (LSIQ):} \citet{garg2021iq} introduce Inverse Soft Q-Learning (IQ-Learn), a non-adversarial method that reformulates MCE-IRL in the Q-policy space. IQ-Learn uses the inverse soft Bellman operator $\mathcal{T^{\pi}Q(\mathbf{s}, \mathbf{a})} = Q(\mathbf{s}, \mathbf{a}) - \gamma \mathbb{E}_{\mathbf{s}' \sim \mathcal{P}} \left[ V_{\pi}(\mathbf{s}') \right]$ to reformulate the entropy regularized distribution matching objective into an objective that only depends on the Q function. They do this by leveraging the fact that the optimal policy depends on $Q(\mathbf{s}, \mathbf{a})$ in closed form. Ultimately, the Q function is learnt by minimizing 
\begin{align*}
\mathcal{J}(\pi, Q) = \mathbb{E}_{\rho_E} \left[\phi \left( Q(\mathbf{s}, \mathbf{a}) - \gamma \mathbb{E}_{\mathbf{s}' \sim \mathcal{P}} [V_{\pi}(\mathbf{s}')] \right) \right] - (1 - \gamma) \mathbb{E_{\rho_{0}}} \left[ V^{\pi}(\mathbf{s}_0) \right]
\end{align*}
and the policy is learnt using SAC \citep{haarnoja2018soft}. \citet{al2023ls} extend IQ-Learn by leveraging a mixture distribution for computing the expectation in $\mathcal{J}(\pi, Q)$. The resulting optimization problem is shown to minimize a bounded $\chi^2$ divergence between the expert and the mixture distribution. The resulting method, called LSIQ, also properly handles absorbing states (discussed further in \Cref{subsec:absorbing_states}), leading to a better performing algorithm. In our motion-capture imitation experiments on the G1 humanoid robot, we use the state-only $Q$-function objective from ~\citep[Appendix~A.1]{garg2021iq} within LSIQ.
\paragraph{Noise Conditioned Energy Based Annealed Rewards (NEAR):} \citet{diwan2025noise} present NEAR, an energy-based imitation learning method that uses score-based models to learn a smooth, stationary reward function that can then directly be optimized by reinforcement learning. NEAR uses noise-conditioned score networks \citep{song2019generative}, a score-based generative model, to approximate the expert distribution as an unnormalized energy function $E_{\theta}(\mathbf{s}, \mathbf{a}, \sigma)$. Given a perturbation level defined by variance $\sigma^2$, the energy function is shown to directly correspond to a reward function. \citet{diwan2025noise} then show that this energy function can be optimized with PPO to learn imitation policies. The paper also introduces an annealing framework to gradually change the $\sigma$ curriculum to improve performance. One of the primary drawbacks of NEAR is that learning an expressive energy-based reward function requires a lot more expert demonstrations that most traditional IRL methods. Hence, NEAR often underperforms in our low-data experimental settings (especially on the G1 and Go2 robots).
\paragraph{Successor Feature Matching (SFM):} \citet{jainnon} introduce SFM, a non-adversarial IL method that directly matches the expert and agent's successor features instead of learning an explicit discriminator or reward function. SFM assumes a base feature map $\phi(\mathbf{s})$ and represents long-horizon feature occupancy through successor features $\psi^{\pi}(\mathbf{s}, \mathbf{a}) = \mathbb{E}_{\pi} [\sum_{t=0}^{\infty} \gamma^t \phi(\mathbf{s}_t) \mid \mathbf{s}_0 = \mathbf{s}, \mathbf{a}_0 = \mathbf{a}]$. The main observation is that, under a linear reward class, the witness reward that maximally distinguishes the expert and agent can be written in closed form as the difference between their expected successor features. Concretely, SFM estimates $\hat{\mathbf{w}} = \hat{\psi}^{E} - \hat{\psi}^{\pi}$ and uses the induced reward direction $r(\mathbf{s}) \propto \phi(\mathbf{s})^\top \hat{\mathbf{w}}$ to update the policy. Rather than training a separate reward model, SFM uses the learnt successor features directly in the actor update, making the method a direct policy-search procedure for minimizing the feature-matching imitation gap. In practice, \citet{jainnon} learn $\phi$ jointly with the policy and implement policy optimization with TD3/TD7-style actor-critic updates. The main drawback of SFM is that its performance depends on the quality of the learnt base features, and it offers no method for extracting the learnt reward function.
\subsection{Implementation Details}
\label{appendix:implementation}
We conduct experiments on {Mujoco continuous control benchmarks \footnote{\href{https://gymnasium.farama.org/environments/mujoco/}{https://gymnasium.farama.org/environments/mujoco/}}} as well as more challenging robotics tasks. The Mujoco environments are described in detail on the official website linked below. For the robotics experiments, we use the Unitree G1 and Unitree Go2 robots. The G1 is a 35 kg, 1.3 m tall humanoid robot capable of dynamic locomotion tasks. It is a 23 degree-of-freedom system with high-torque quasi-direct-drive actuators for greater speed and precise control. The G1 simulation environment has a 256-dimensional observation space composed of base angular velocity and roll/pitch, full joint positions and velocities, the previous action, and a short history stack of recent proprioceptive states. The G1 has a 23-dimensional action space, where each component is a target joint position increment passed through a PD controller (position control). The Go2 is a 15.8 kg, 0.67 m long quadruped robot designed for dynamic legged locomotion. It is a 12 degree-of-freedom system with actuated joints in each of its four legs. The Go2 simulation environment has a 42-dimensional observation space composed of trunk linear and angular velocities, full joint positions and velocities, and the previous action. It has a 12-dimensional action space, where each component is a target joint position passed through a PD controller. \Cref{fig:env_snapshots} shows a snapshot of all environments.
\begin{figure}[t!]
    \centering
    \includegraphics[width=0.75\textwidth]{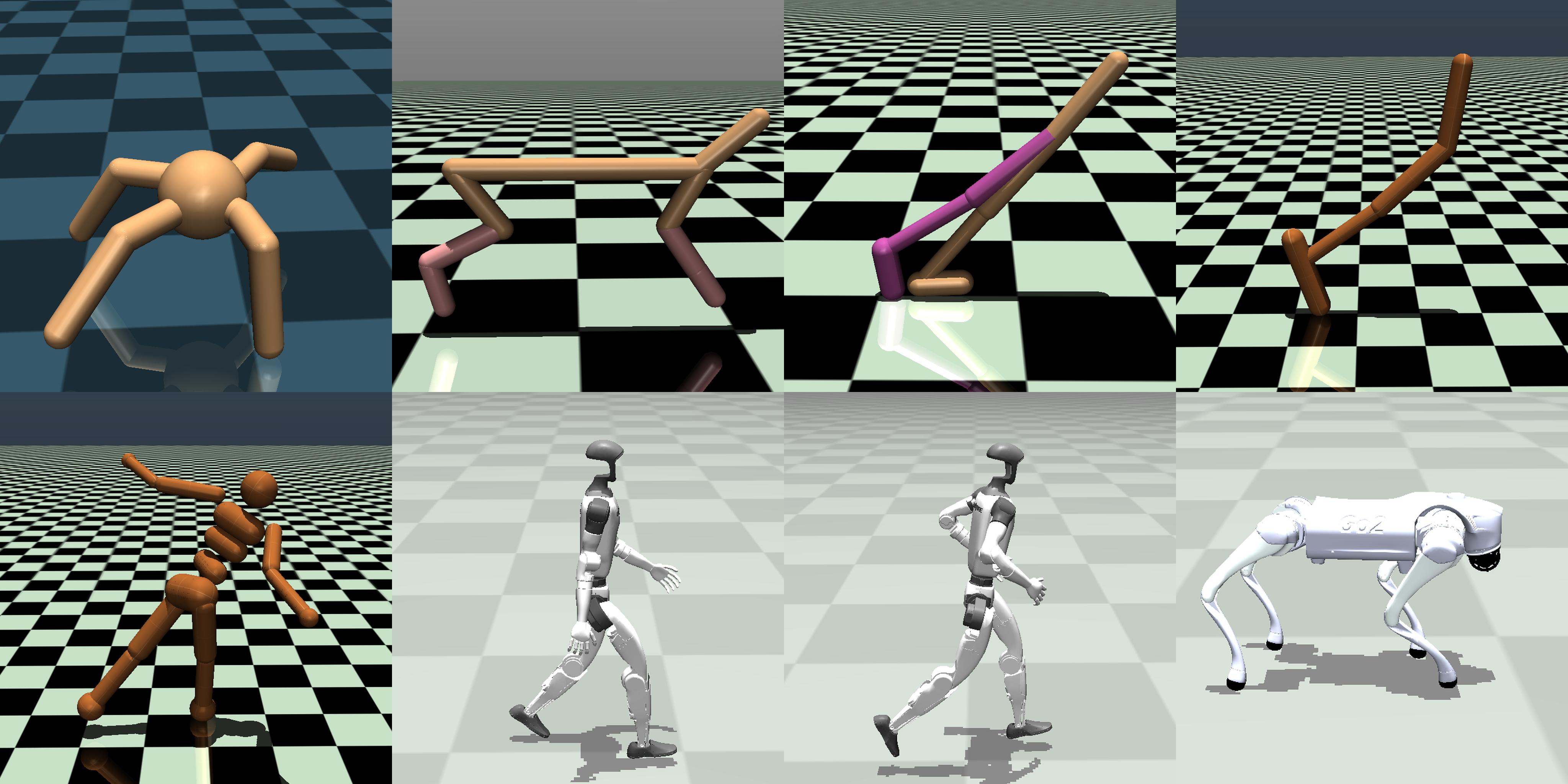}
    \caption{Environments used in our experiments.}
    \label{fig:env_snapshots}
\end{figure}

We leverage massively parallel reinforcement learning environments for all our experiments \citep{bohlinger2023rlx}. To this end, we use the Mujoco XLA simulation framework \citep{todorov2012mujoco} to handle all algorithm and environment computations on the GPU. Further, for a fair comparison, we implement all methods using the just-in-time compiled Python frameworks, Jax/Flax \citep{jax2020github, flax2020github}. This allows us to parallelize multiple algorithmic components for all methods efficiently. Such parallelization is especially useful for quickly computing the several intermediate quantities described in \Cref{subsec:disc_buffer,subsec:tr_policies}. Specifically, we rely on parallelization for:
\begin{itemize}
    \item \textit{Trust-region projections:} computing Lagrangian multipliers for all samples in parallel.
    \item \textit{Reward interpolations:} running inference on all discriminators in the buffer at once.
    \item \textit{Retrospective Lagrangian multipliers:} computing Lagrangian multipliers for all samples across all past policies in the buffer.
\end{itemize}
\subsection{Trust Region Projections}
\label{appendix:tr_projection}
Here, we briefly elaborate on the algorithmic and implementation details of computing trust region policy projections (\Cref{subsec:tr_policies}). Given a Gaussian policy $\pi(\cdot \vert \mathbf{s}) = \mathcal{N}(\mu_{\theta}(\mathbf{s}), \Sigma_{\phi})$, \citet{otto2021differentiable} propose a method to exactly enforce the analytical reverse KL trust region by the application of a projection layer during policy optimization. The projection layer maps every violating policy prediction back into the trust region such that the projected policy parameters $(\tilde{\mu_{\mathbf{s}}}, \tilde{\Sigma})$ solve the following optimization problem:
\begin{align}
    \arg \min_{\tilde{\mu}_{\mathbf{s}}} d_\textrm{mean} \left(\tilde{\mu}_{\mathbf{s}}, \mu_{\theta}(\mathbf{s}) \right),  \quad &\text{s.t.} \quad d_\textrm{mean} \left(\tilde{\mu}_{\mathbf{s}},  \mu_{\textrm{old}}(\mathbf{s}) \right) \leq \zeta_\mu, \quad \nonumber \\
    \arg \min_{\tilde{\Sigma}} d_\textrm{cov} \left(\tilde{\Sigma}, \Sigma_{\phi} \right), \quad &\text{s.t.} \quad d_\textrm{cov} \left(\tilde{\Sigma}, \Sigma_{\textrm{old}} \right) \leq \zeta_\Sigma,
    \label{eq:generic_projection_cov}
\end{align}
where, $d_\textrm{mean} = \nicefrac{1}{2}\left( (\mu_2 - \mu_1)^T \Sigma_2^{-1}(\mu_2 - \mu_1) \right)$ and $d_\textrm{cov} = \nicefrac{1}{2}\left( \log\nicefrac{|\Sigma_2|}{|\Sigma_1|} + \text{tr} \{ \Sigma_2^{-1}\Sigma_1 \} - d \right)$ are the mean and covariance components of the reverse KL divergence for two Gaussian distributions with means $\mu_1$ and $\mu_2$, covariances $\Sigma_1$ and $\Sigma_2$, and dimensionality $d$. The projected policy's mean is known in closed form as: 
\begin{align*}
\tilde{\mu_s} &= \frac{\mu_{\theta}(\mathbf{s}) + \eta_{\mu}(\mathbf{s}) \; \mu_{\text{old}}(\mathbf{s})}{1 + \eta_{\mu}(\mathbf{s})} \quad \text{with Lagrangian multiplier} \\
\eta_{\mu}(\mathbf{s}) &= \sqrt{\frac{{\left(\mu_{\text{old}}(\mathbf{s}) - \mu_{\theta}(\mathbf{s})\right)}^T {\Sigma_{\text{old}}^{-1}} \left(\mu_{\text{old}}(\mathbf{s}) - \mu_{\theta}(\mathbf{s})\right)}{\zeta_\mu}} - 1
\end{align*}
For the covariance, \citet{otto2021differentiable} compute the projected policy's precision $\tilde{\Lambda} = \tilde{\Sigma}^{-1}$ as an interpolation between precision matrices:
\begin{align*}
\tilde{\Lambda} &= \frac{\eta_{\Sigma}^{\star} \; \Lambda_{\text{old}} + \Lambda}{\eta_{\Sigma}^{\star} + 1 }, \quad \eta_{\Sigma}^{\star} = \arg \min_{\eta_{\Sigma}} \; g(\eta_{\Sigma}), \; \text{s.t.} \; \eta_{\Sigma} \geq 0 \\ 
\end{align*}
where $\eta_{\Sigma}$ is the covariance Lagrangian multiplier and $g(\eta_{\Sigma})$ the dual function of \Cref{eq:generic_projection_cov}. This dual minimization cannot be solved in closed form, however \citet{otto2021differentiable} formulate a differentiable trust region layer by solving the minimization using L-BFGS \citep{liu1989limited} and computing its gradients by taking the differentials of the KKT
conditions of the dual. Following \citep[Appendix~B.5]{otto2021differentiable}, we re-implement the trust region projection layer in Jax, using \citet{jaxopt_implicit_diff} for L-BFGS and \texttt{jax.custom\_vjp} to compute its gradients. Crucially, Jax's parallelization allows us to compute these multipliers very quickly, even for very large rollout batches (approx. 41k samples at once).
\subsection{Absorbing State Handling}
\label{subsec:absorbing_states}
In traditional reinforcement learning settings, when an agent enters an absorbing state $\mathbf{s}_a$, it receives zero reward (i.e. $r(\mathbf{s}_a, .) = 0$), and the next state for any next agent action is always $\mathbf{s}_a$ (i.e. $\mathcal{P}(\mathbf{s}_a \vert \mathbf{s}_a, .) = 1$). This elegantly handles episode termination without hindering the RL objective of maximizing a known reward function. \citet{kostrikov2018discriminator} highlight that this assumption, however, causes problems in inverse reinforcement learning --- where the reward is learnt and the goal is to instead learn policies that imitate the expert. Primarily, the issue is that the optimal learnt reward can vary in scale arbitrarily, meaning that absorbing states are either seen are highly rewarding or highly costly. Depending on the task, this induces a survival/termination bias in the agent. Improper handling of absorbing states may cause some methods to over/under-perform relative to their true potential. As shown empirically in \citet{kostrikov2018discriminator}, adversarial methods like GAIL and AIRL \citep{ho2016generative, fu2017learning} can get a large performance gain by such biases. \citet{al2023ls} also highlight that the same is true for non-adversarial methods like IQ-Learn \citep{garg2021iq}.

For a fair comparison, we learn the reward function for absorbing states in all methods shown in this paper. For adversarial methods like GAIL, AIRL, and AMP \citep{peng2021amp}, as well as our proposed algorithm (TRIRL), we handle absorbing states similarly to \citet{kostrikov2018discriminator}. i.e., we fit the discriminator on expert and agent samples in absorbing states, add an indicator variable in the discriminator input to indicate the absorbing state, and use the absorbing state value during advantage estimation:
\begin{align*}
V_{\pi}(\mathbf{s}) &= \mathbb{E}_{\pi} \left[ r(\mathbf{s}, \mathbf{a}) + \left( 1 - \nu \right) \; \gamma V_{\pi}(\mathbf{s}') + \nu \; V(\mathbf{s}_a) \right] \quad \text{where} \\
V(\mathbf{s}_a) &= \frac{\gamma}{1 - \gamma} \; r(\mathbf{s}_a) \quad ; \quad r(\mathbf{s}_a) \text{ is learnt } \quad ; \quad \nu = \mathbf{1}_{\{\mathbf{s}'\text{ is absorbing}\}}.
\end{align*}
For NEAR \citep{diwan2025noise}, we train the energy-based reward on absorbing states and follow the same procedure for advantage estimation. LSIQ \citep{al2023ls} rectifies absorbing state handling in IQ-Learn, and we implement it as described in the paper. 
\subsection{Hyperparameters}
\label{appendix:hyperparams}
The following hyperparameters were fixed across all methods in this paper. For the policy, we use a 3-layer dense network and predict the $\log$ standard deviation to ensure non-negativity. For the simpler, Mujoco tasks we use 256 neurons per layer and for the harder, robotics tasks we use [512, 256, 128] neurons respectively. We use $\tanh$ activations between all layers and clip actions to the environment's action range. We use the same architecture for the discriminator and critic networks (except for the robotics tasks where we use [1024, 512] neurons for the critic). We use a rollout horizon of 10 steps and set the following RL hyperparams: \{ $\gamma: 0.99$, $\lambda_{\text{gae}}: 0.95$, PPO clipping: 0.2, gradient clipping: 1.0, gradient penalty: 0.005, mini-batch size: 512 \}. We tune learning rates and the number of gradient updates for each algorithm. In general, we found optimal values in the following ranges: \{ $\text{LR}_\pi: 4e^{-5}$ - $1e^{-4}$ , $\text{LR}_{\text{disc}}: 8e^{-5}$ - $3e^{-4}$, grad steps: 20 - 30\}. For adversarial methods, we found that significantly fewer discriminator gradient steps were needed (approx. 3 - 10). We also tune the entropy coefficient ($\nicefrac{1}{\beta}$ in our case) specifically for each environment. In general, we found that values in the range $5e^{-3}$ - $8e^{-5}$ perform the best. \Cref{tab:hyperparams} shows hyperparameters specific to our method. \Cref{tab:exp_random_performace} reports raw expert and random returns.
\begin{table}[t!]
\centering
\caption{Hyperparameters specific to our method.}
\label{tab:hyperparams}
\begin{tabular}{@{}lll@{}}
\toprule
\textbf{Hyperparameter} & \textbf{Optimal Range} & \textbf{Tuning Recommendation} \\
\midrule
$\epsilon$            & 0.2 - 0.4 & start low \& increase for faster convergence \\
disc. buffer capacity & 80 - 120    & start high \& decrease for faster runtime \\

\addlinespace[0.4em]
\multicolumn{3}{@{}l}{\bm{$\max \eta$}} \\
\quad mean bound & 0.0002 - 0.005 & refer to \citet{otto2021differentiable} \\
\quad cov bound  & 0.0001 - 0.004 & " \\
\quad TR regression loss coef.~\citep[Section~4.4]{otto2021differentiable}
                 & 0.4 - 0.6      & " \\

\addlinespace[0.4em]
\multicolumn{3}{@{}l}{\textbf{TR loss}} \\
\quad $\eta_{\text{init}}$ & 60 - 100 & start high \& decrease for faster convergence \\

\bottomrule
\end{tabular}
\end{table}
\paragraph{Notes on SAC Training:} As mentioned above, we leverage massively parallel simulators in our reinforcement learning setup. These simulators greatly reduce training time and make better use of the available computational resources. In our analysis, all baselines except LSIQ \citep{al2023ls} use PPO \citep{schulman2017proximal} for policy optimization. Being on-policy, PPO scales very easily with parallelized environments and allows us to greatly scale up training speed for all methods. LSIQ, however, relies on SAC \citep{haarnoja2018soft} for policy optimization in continuous settings. This is a drawback because SAC is difficult to scale with parallel simulations. Without parallelized environments, we found that LSIQ (as well as IQ-Learn \citep{garg2021iq}) takes roughly 12–18 hours (about 8-10x the mean runtime of the other methods) to reach the same number of training samples as the baselines, while performing roughly equally. To ensure a fair comparison with a similar computational budget for all methods, we tune LSIQ specifically for parallelized environments. In this context, \citet{li2023parallel} identify key challenges in scaling SAC and provide hyperparameter recommendations for improving training performance. Their main suggestions are to greatly increase the replay buffer capacity, the batch size, and the critic–policy update ratio. Following these recommendations, we tuned LSIQ to perform well in parallelized simulators while requiring much less runtime than the non-parallelized version. Because of these scaling issues, we also considered the number of environments as a hyperparameter for LSIQ tuning. We also note that both the parallel and single-environment versions of LSIQ reached similar maximum performance when using a tuned set of hyperparameters. Sample efficiency could be further improved by considering recent advancements in using batch/weight norms in SAC \citep{palenicek2025xqc}.

\begin{table}[t!]
\centering
\caption{Expert and random policy performance. \textsuperscript{\dag} The G1 experiments use motion-capture data for the expert dataset. For this task, we normalize with respect to a ``perfect imitation" score with the per step score as $\exp({-(v_{\pi} - v_{\pi_E})^2})$, where $v_{\pi}$ and $v_{\pi_E}$ are the agent's and expert's upper-body velocities.}
\label{tab:exp_random_performace}
\begin{tabular}{lll}
\hline
\multicolumn{1}{c}{\textbf{Environment}} & \multicolumn{1}{c}{\textbf{Expert Performance}} & \multicolumn{1}{c}{\textbf{Random Performance}} \\ \hline
Half Cheetah & 5050.39 & -310.55 \\
Half Cheetah (wind) & 4841.05 & -369.77 \\
Half Cheetah (mars grav.) & 4100.14 & -418.77 \\
Ant          & 5390.62 & 94.43   \\
Ant-Disabled & 4377.28 & 76.59   \\
Walker       & 5480.83 & 37.53   \\
Hopper       & 3531.76 & 24.56   \\
Humanoid     & 7404.87       & 236.45  \\
Go2          & 2341.38 & 1.37    \\
G1\textsuperscript{\dag}           & 1000.00 & 2.86    \\ \hline
\end{tabular}
\end{table}